\documentclass{article}

\usepackage[preprint]{neurips_2026}

\usepackage[utf8]{inputenc}
\usepackage[T1]{fontenc}
\usepackage{hyperref}
\usepackage{url}
\usepackage{booktabs}
\usepackage{amsfonts}
\usepackage{amsmath}
\usepackage{amssymb}
\usepackage{amsthm}
\usepackage{nicefrac}
\usepackage{microtype}
\usepackage{xcolor}
\usepackage{graphicx}
\usepackage{algorithm}
\usepackage{algorithmic}
\usepackage{enumitem}
\usepackage{listings}

\newtheorem{theorem}{Theorem}
\newtheorem{proposition}[theorem]{Proposition}
\newtheorem{lemma}[theorem]{Lemma}
\newtheorem{corollary}[theorem]{Corollary}

\theoremstyle{definition}
\newtheorem{definition}[theorem]{Definition}

\theoremstyle{remark}
\newtheorem{remark}[theorem]{Remark}

\DeclareMathOperator{\im}{Im}
\newcommand{\R}{\mathbb{R}}
\newcommand{\C}{\mathbb{C}}
\newcommand{\SO}{\mathrm{SO}}

\newcommand{\SH}{Y}
\newcommand{\CG}[6]{C^{#5,\,#6}_{#1,\,#2;\,#3,\,#4}}
\newcommand{\abs}[1]{\lvert #1 \rvert}

\newcommand{\conj}[1]{\overline{#1}}

\theoremstyle{definition}

\setlist[itemize]{noitemsep,topsep=2pt,partopsep=0pt}
\setlist[enumerate]{noitemsep,topsep=2pt,partopsep=0pt}
\title{\texttt{bispectrum}: Selective $G$-Bispectra Made Practical}

\newcommand{\authorblock}[4]{%
  \begin{minipage}[t]{0.43\textwidth}
    \vspace{0pt}
    \centering
    \textbf{#1}\\
    #2\\
    #3\\
    \texttt{#4}
  \end{minipage}%
}

\author{%
\normalfont
\begin{tabular*}{\textwidth}{@{\extracolsep{\fill}}cc@{}}
  \authorblock
    {Johan Mathe}
    {Atmo, Inc.}
    {San Francisco, CA}
    {johan@atmo.ai}
  &
  \authorblock
    {Adele Myers Lantow}
    {UC Santa Barbara}
    {Santa Barbara, CA}
    {adele@ucsb.edu}
  \\[5em]
  \authorblock
    {Simon Mataigne}
    {UCLouvain}
    {Louvain-la-Neuve, Belgium}
    {simon.mataigne@uclouvain.be}
  &
  \authorblock
    {Nina Miolane}
    {UC Santa Barbara}
    {Santa Barbara, CA}
    {nmiolane@ucsb.edu}
\end{tabular*}%
}

\begin{document}

\maketitle

\begin{abstract}
Many machine learning tasks are invariant under the action of a group $G$ of transformations: signal classification
can be invariant under translations, image
classification under 2D rotations, and spherical-image classification under 3D rotations. The $G$-bispectrum is a principled \emph{complete} invariant of a signal (retaining all all signal's information up to the group action) with proven benefits in machine learning and as a pooling layer in deep networks. However, its
deployment has been hampered by high computational cost and a patchwork of
group-specific implementations.
We present \texttt{bispectrum}, an open-source, fully unit-tested
PyTorch library that implements \emph{selective} $G$-bispectra for seven different group actions, as differentiable modules that can be directly incorporated into machine learning pipelines and deep learning architectures. For finite groups $G$, selectivity
reduces the computational cost from $O(|G|^2)$ to $O(|G|)$. For planar rotations, we leverage the disk bispectrum. For spherical 3D rotations,
we introduce an augmented selective bispectrum at band-limit
$L$ which reduces the cost from $O(L^3)$ to $\Theta(L^2)$ coefficients.
We profile the entire library (for which we implemented various compute optimizations), showing that it delivers near-exact $G$-invariance with its selective $G$-bispectra computed in sub-millisecond time on GPU (up to commonly used bandlimits). We evaluate the benefits of incorporating $G$-bispectra as pooling layers into deep learning architectures on three classical benchmark datasets --comparing against norm pooling,
gated pooling, Fourier-ELU pooling, max pooling, and (non-equivariant)
data-augmented convolutional baselines. Results show that $G$-bispectra consistently outperform alternatives
in the low-data, moderate-capacity regime.
Code is available at
\url{https://github.com/geometric-intelligence/bispectrum/}.
\end{abstract}
\section{Introduction}
\label{sec:intro}

Many machine learning tasks are invariant under the action of a group $G$ of transformations: signal classification
can be invariant under translations, image
classification under 2D rotations, and spherical-image classification under 3D rotations. Equivariant neural networks~\citep{cohen2016group, weiler2019general} enforce invariance by adding an invariant pooling layer to their equivariant feature maps.
Existing choices (norm pooling, gate-based pooling, max/sum
aggregation~\citep{weiler2019general}) are simple but
\emph{incomplete}: they discard information that a complete
group-invariant representation would preserve. For instance, norm pooling retains only the power spectrum of the feature map, so two signals that share the same per-frequency magnitudes but differ in their relative phases (and are therefore not related by any group element) are mapped to the same invariant descriptor.

The $G$-bispectrum is a principled
alternative~\citep{giannakis1989bispectrum, kakarala2012bispectrum,
sanborn2023bispectral, mataigne2024selective, myers2025selective}. The $G$-bispectrum is a complex-valued descriptor of a signal that is invariant under the action of a group $G$. It has three properties that make it particularly suited to learning on structured data: (i)~\textit{completeness} --- it preserves all signal
information up to the group action; (ii)~\textit{inversion} --- the signal can be
recovered from the invariant up to a group action; and
(iii)~\textit{selectivity} --- a small subset of the $G$-bispectrum's entries suffices for completeness on generic signals, a fact first observed for translations by~\citet{giannakis1989bispectrum} and recently extended to finite groups and the disk~\citep{mataigne2024selective,myers2025selective}.

\paragraph{Existing constructions of $G$-bispectra are scattered across theoretical papers and group-specific implementations.} The selective $G$-bispectrum
of~\citet{mataigne2024selective} covers signals defined over finite groups. The selective disk bispectrum from
\citet{myers2025selective} proposes a bispectrum for 2D images defined over the disk, with invariance with respect to planar rotations. The spherical bispectrum of \citet{kakarala2012bispectrum} provides a representation of spherical images that is invariant to 3D rotations; however, neither an implementation nor an empirical evaluation within machine learning pipelines has been reported to date. The classical translation bispectrum~\citep{giannakis1989bispectrum} is well-known in signal processing but likewise lacks an open, GPU-ready implementation. \citet{sanborn2023bispectral} provides a bispectral neural network implementation for cyclic groups, but it does not extend to continuous groups or general finite groups. The absence of a unified, efficient, and differentiable implementation across group actions has limited the bispectrum's adoption as a practical building block in modern deep learning pipelines.

\paragraph{Contributions.} We present, profile, and evaluate \texttt{bispectrum}, an open-source, fully unit-tested
PyTorch library that implements \emph{selective} $G$-bispectra as differentiable modules that can be directly incorporated into machine learning pipelines and deep learning architectures. Our contributions are:
\begin{enumerate}[leftmargin=*]
  \item \textbf{Software: Pytorch Library.} We implement selective
    $G$-bispectra for \emph{seven group actions} as drop-in differentiable modules with a
    uniform API unlocking practical deployment across machine learning and deep learning pipelines. We provide an efficient implementation with sub-millisecond GPU forward passes  and a fully unit-tested codebase (96 \% line coverage).
  \item \textbf{Theory: Selective Bispectrum for Spherical Rotations}
    We introduce the first selective bispectrum for 3D rotations acting on spherical images, at bandlimit $L$, reducing the cost from $O(L^3)$ to $O(L^2)$ and empirically demonstrate its completeness when augmented with a few power coefficients.
 \item \textbf{Experiments: Unified Evaluation.} We evaluate bispectral
  pooling on three classification tasks spanning planar, volumetric, and
  spherical domains, each equipped with a different symmetry group. We
  compare against norm pooling, gated pooling, Fourier-ELU pooling, max
  pooling, and data-augmented convolutional baselines. Bispectral pooling delivers
  near-exact $G$-invariance and a consistent data-efficiency
  advantage over incomplete alternatives in the low-data,
  moderate-capacity regime, converging to them at high capacity.
\end{enumerate}

Together, these contributions close the gap between the theoretical appeal of $G$-bispectra and their practical use in modern machine learning and deep learning pipelines.

\section{Mathematical Background}
\label{sec:background}

We introduce the main mathematical concepts used to define $G$-bispectra.
Given a group $G$, a \emph{representation} $\rho$ assigns each $g\in G$ a
unitary matrix $\rho(g)$ with $\rho(gh)=\rho(g)\rho(h)$; an
\emph{irreducible representation} (\emph{irrep}) has no nontrivial
invariant subspace, and the \emph{dual} $\hat{G}$ is the set of equivalence
classes of irreps. A $G$-action on a domain $X$,
$(g,x)\mapsto g\cdot x$, induces an action on signals
$(f\circ g)(x):= f(g\cdot x)$; two signals are in the same \emph{orbit}
if and only if they are related by some $g\in G$, and an \emph{invariant}
is a function constant on each orbit.
A neural network layer $\Phi$ is \emph{$G$-equivariant} if
$\Phi(\pi_{\mathrm{in}}(g)\, f) = \pi_{\mathrm{out}}(g)\, \Phi(f)$
for all $g \in G$~\citep{cohen2016group, weiler2019general}.
Stacking equivariant layers preserves equivariance, but any task requiring
a prediction independent of the group action must eventually map equivariant
features to invariant ones.

\paragraph{Bispectrum on finite groups.}
The $G$-bispectrum~\citep{kakarala1992triple, mataigne2024selective}
provides a complete invariant pooling mechanism for signals on finite
groups. Given a signal $\Theta : G \to \R$ on a finite group $G$ with
unitary irreps $\{\rho_i\}_{i\in\hat{G}}$, the \emph{$G$-Fourier
transform}~\citep{diaconis1990fourier} is
\begin{equation}\label{eq:fourier-finite}
  \mathcal{F}(\Theta)_{\rho_i}
  := \sum_{g \in G} \Theta(g)\, \rho_i(g)^{\dagger}
  \;\in\; \C^{d_i \times d_i},
\end{equation}
where $d_i = \dim(\rho_i)$. The \emph{full $G$-bispectrum} is
\begin{equation}\label{eq:bispectrum-def}
  \beta(\Theta)_{\rho_1,\rho_2}
  = \bigl[\mathcal{F}(\Theta)_{\rho_1} \otimes
    \mathcal{F}(\Theta)_{\rho_2}\bigr]\,
    C_{\rho_1,\rho_2}\,
    \Bigl[\bigoplus_{\rho \in \rho_1 \otimes \rho_2}
      \mathcal{F}(\Theta)_{\rho}^{\dagger}\Bigr]\,
    C_{\rho_1,\rho_2}^{\dagger},
\end{equation}
where $C_{\rho_1,\rho_2}$ is the unitary Clebsch--Gordan matrix that
block-diagonalizes the tensor product $\rho_1 \otimes \rho_2$ into
irreducible summands. Each matrix
$\beta(\Theta)_{\rho_1,\rho_2}$ is $G$-invariant, and the full collection
$\{\beta(\Theta)_{\rho_1,\rho_2}\}_{(\rho_1,\rho_2)\in\hat{G}^2}$
is a complete $G$-invariant for generic signals (i.e.\ those with
nonsingular Fourier coefficients)~\citep{kakarala1992triple}.
The full bispectrum has $O(|G|^2)$ scalar entries.
\citet{mataigne2024selective} shows that a breadth-first search over
the Kronecker product table of $\hat{G}$ selects $O(|G|)$ bispectral
coefficients that remain complete; for example, the octahedral group
drops from $576$ to $172$ scalars.

\paragraph{Bispectrum on the disk.}
The disk bispectrum~\citep{zhao2014rotationally, myers2025selective}
is an $\SO(2)$-invariant descriptor for planar images defined on the
unit disk $\mathbb{D}$. A signal $f : \mathbb{D} \to \R$ is expanded in
disk harmonics (eigenfunctions of the Laplacian on $\mathbb{D}$):
\begin{equation}\label{eq:disk-harmonics}
  \psi_{nk}(r,\theta) = c_{nk}\, J_n(\lambda_{nk}\, r)\, e^{in\theta},
\end{equation}
where $J_n$ is the Bessel function of order $n$, $\lambda_{nk}$ its
$k$-th positive root, and $c_{nk}$ a normalization constant.
The disk harmonic (DH) coefficients $a_{n,k}$ are equivariant to planar
rotations: if $f' = f \circ R_\phi$ then
$a^{f'}_{n,k} = e^{in\phi}\, a^{f}_{n,k}$.
For a bandlimit yielding $m$ coefficients with maximum angular frequency
$N_m$, the \emph{full disk bispectrum}~\citep{zhao2014rotationally} is
\begin{equation}\label{eq:bispectrum-disk}
  b_{j_1,j_2,k_3}
  = a_{n_{j_1},k_{j_1}}\, a_{n_{j_2},k_{j_2}}\,
    a_{n_{j_1}+n_{j_2},\,k_3}^{*},
\end{equation}
with $O(m^3 / N_m)$ coefficients. \citet{myers2025selective} introduces
the \emph{selective disk bispectrum}, restricting to two families of
triples that yield $O(m)$ coefficients while remaining a complete
$\SO(2)$-invariant (under the genericity condition $a_{n,1}\neq 0$ for
all angular orders $n$).

\paragraph{Bispectrum on the sphere.}
The spherical bispectrum~\citep{kakarala2012bispectrum} is an
$\SO(3)$-invariant descriptor for band-limited signals on $S^2$.
A real signal $f : S^2 \to \R$ at band limit $L$ has spherical harmonic
coefficients $a_\ell^m$ with coefficient vectors
$\mathbf{F}_\ell = (a_\ell^{-\ell},\ldots,a_\ell^{\ell})^\top
\in \C^{2\ell+1}$, transforming under $g\in\SO(3)$ as
$\mathbf{F}_\ell \mapsto D^\ell(g)\,\mathbf{F}_\ell$, where $D^\ell$
is the $(2\ell{+}1)$-dimensional Wigner $D$-matrix.
For each admissible triple $(\ell_1,\ell_2,\ell)$ satisfying the
triangle inequality
$|\ell_1 - \ell_2| \le \ell \le \ell_1 + \ell_2$, the
\emph{bispectral coefficient} is
\begin{equation}\label{eq:bispectrum-sphere}
  \beta_{\ell_1,\ell_2,\ell}
  = \sum_{\substack{m_1,\,m_2,\,m \\ m_1+m_2=m}}
    \CG{\ell_1}{m_1}{\ell_2}{m_2}{\ell}{m}\;
    a_{\ell_1}^{m_1}\,a_{\ell_2}^{m_2}\,\conj{a_\ell^m},
\end{equation}
where $\CG{\ell_1}{m_1}{\ell_2}{m_2}{\ell}{m}$ denotes the
Clebsch–Gordan coefficient
$\langle \ell_1, m_1;\, \ell_2, m_2 \mid \ell, m \rangle$.
The full set has $\Theta(L^3)$ scalars. On homogeneous spaces such as
$S^2$, the Fourier coefficients are rank-deficient (vectors rather than
full-rank matrices), so the operator-bispectrum completeness
of~\citet{kakarala2012bispectrum} does not directly
apply~\citep{kondor2007novel}. In Section~\ref{sec:so3-selective} we
introduce a selective construction that reduces the scalar bispectrum to
$\Theta(L^2)$ entries and empirically demonstrate its completeness via
signal reconstruction.

Throughout this paper, we use the term \emph{$G$-bispectrum} to refer
collectively to the bispectra defined by
Equations~\eqref{eq:bispectrum-def},~\eqref{eq:bispectrum-disk},
and~\eqref{eq:bispectrum-sphere} for the simplicity of terminology.

\section{Related Work}
\label{sec:related}
This section reviews related work along three axes: (i) equivariant and invariant libraries that provide building blocks for $G$-invariant architectures, (ii) bispectral methods in neural networks and their computational trade-offs, and (iii) alternative invariant constructions (CG products, scattering) that compete with or complement the $G$-bispectrum. 

\paragraph{Equivariant network libraries.}
The \texttt{escnn}~\citep{cesa2022program} and \texttt{e3nn}~\citep{geiger2022e3nn}
libraries provide general-purpose equivariant convolution layers for 2D and 3D
groups respectively, and \texttt{lie-nn}~\citep{batatia2023general}
provides general Lie-group representation primitives (irreducible
representations, Clebsch--Gordan decompositions, tensor products) on
top of which custom equivariant layers can be built. These libraries
implement equivariant \emph{feature maps} but leave the choice of
invariant pooling to the user (typically norm or gate). Our library is
complementary: it provides the invariant map as a drop-in
\texttt{nn.Module} that plugs into architectures built with any of these
frameworks.

\paragraph{Bispectral approaches in neural networks.}
\citet{sanborn2023bispectral} introduced bispectral neural networks (BNNs),
using the full $G$-bispectrum as an invariant nonlinearity for $C_n$ and $D_n$.
\citet{oreiller2022bispectral} applied bispectral CNNs to medical imaging
with a fixed set of hand-selected triples. Both use the full (quadratic-cost)
bispectrum and provide implementations tied to specific architectures.
Our library generalizes to seven group/domain pairs, defaults to the
$O(|G|)$ selective reduction~\citep{mataigne2024selective}, and exposes
a framework-agnostic API.
\citet{chevalley2024bispectral} explores bispectral signatures for data
characterization, using bispectra as descriptors rather than learnable
pooling primitives.
Outside the equivariant-network setting,
\texttt{PyBispectra}~\citep{binns2025pybispectra} computes bispectrum
and bicoherence statistics of one-dimensional time-series signals for
electrophysiology applications such as phase-amplitude coupling and
time-delay estimation; the underlying 1D bispectrum is translation-invariant
by construction, but the toolbox is not designed as a neural-network
pooling layer.

\paragraph{Bispectra on homogeneous and non-homogeneous spaces.}
The selective $G$-bispectrum
construction~\citep{mataigne2024selective} reduces the cost of the
$G$-bispectrum from $O(|G|^2)$ to $O(|G|)$ scalars on finite groups.
\citet{myers2025selective} extends the selective construction to the
disk, where $\SO(2)$ acts on a non-homogeneous space.
\citet{kakarala2012bispectrum} treats the $\SO(3)$-on-$\SO(3)$
bispectrum and the closely related sphere case via Wigner-$D$ irreps.
The matrix (vector) bispectrum on $S^2$ is the homogeneous-space form
on which our scalar contractions and CG power augmentation are built.

\paragraph{Invariant theory and scattering.}
The wavelet scattering transform~\citep{mallat2012scattering} provides
translation-invariant representations via iterated modulus operations;
\citet{esteves2018spherical} extends this to the sphere. Unlike the
$G$-bispectrum, scattering representations are stable but generally
incomplete; they discard inter-scale phase information.
The $G$-bispectrum's completeness
guarantee~\citep{giannakis1989bispectrum, kakarala2012bispectrum} is
its distinguishing theoretical advantage.

\begin{table}[t]
  \caption{Trade-offs among invariant nonlinearities. The selective
    bispectrum (this work) is the only exact, complete, linear-cost
    option for finite groups; our $\SO(3)$-on-$S^2$ variant achieves
    $\Theta(L^2)$ with empirical completeness
    (Sec.~\ref{sec:so3-selective}).
    $^\dagger$Genericity: completeness holds outside a measure-zero
    algebraic subvariety.}
  \label{tab:nonlinearity-comparison}
  \centering
  \small
  \setlength{\tabcolsep}{4pt}
  \begin{tabular}{lcccl}
    \toprule
    Nonlinearity & Complete? & Coefficients & Forward cost & Key limitation \\
    \midrule
    Norm              & No              & $O(|G|)$   & $O(|G|)$          & Discards phase \\
    Gate              & No              & $O(|G|)$   & $O(|G|)$          & Scalar bottleneck \\
    Fourier pointwise & Yes$^\ddagger$  & $O(|G|)$   & $O(|G| \log |G|)$ & Aliasing \\
    Full bispectrum (finite $G$)      & Yes & $O(|G|^2)$   & $O(|G|^3)$ & Quadratic cost \\
    Full bispectrum ($\SO(3)$/$S^2$)  & Yes & $\Theta(L^3)$ & $O(L^4)$  & Cubic cost \\
    \midrule
    Selective bispectrum (finite $G$) & Yes       & $O(|G|)$      & $O(|G|^2)$ & Genericity assumption$^\dagger$ \\
    Augmented selective $\SO(3)$/$S^2$ (ours) & Empirical & $\Theta(L^2)$ & $O(L^3)$   & Proof of concept only \\
    \bottomrule
  \end{tabular}

  \vspace{0.3em}
  {\small $^\ddagger$Complete pre-aliasing; the pointwise nonlinearity introduces aliasing that may break completeness in practice.}
\end{table}

Table~\ref{tab:nonlinearity-comparison} summarizes the trade-offs between
exactness, completeness, and computational cost across existing invariant
pooling layers. Norm pooling and gated nonlinearities are exact
invariants but \emph{incomplete}. Fourier-space pointwise
nonlinearities preserve completeness but sacrifice exactness through
\mbox{aliasing~\citep{weiler2019general}}. The full bispectrum is exact and
complete but costs $O(|G|^2)$ coefficients. The \emph{selective}
bispectrum (introduced below) preserves both exactness and completeness
while using only $O(|G|)$ coefficients on generic signals.

\section{The \texttt{bispectrum} Library}
\label{sec:library}

\paragraph{Design and implementation.} \label{sec:design} Every $G$/domain pair is a \texttt{torch.nn.Module} named
\texttt{\{Group\}on\{Domain\}} (e.g.\ \texttt{SO3onS2});
Table~\ref{tab:supported-groups} lists all seven modules. Each exposes a
uniform API: \texttt{forward(f)} returns the selective invariants,
\texttt{fourier(f)} the group Fourier coefficients, and
\texttt{invert(beta)} the reconstructed signal up to group-action
indeterminacy (where available). Modules default to $O(|G|)$
selective coefficients (\texttt{selective=False} restores the full
$O(|G|^2)$ set), precompute CG matrices / DFT kernels / Bessel
roots as non-learnable buffers, and depend only on
PyTorch~\citep{paszke2019pytorch}, NumPy~\citep{harris2020numpy},
and \texttt{torch\_harmonics}~\citep{bonev2023spherical} (for
\texttt{SO3onS2}). Test suite line coverage is 96.8\%; a minimal
usage example is in Appendix~\ref{app:api-example}.

\begin{table}[t]
  \caption{Seven group/domain pairs in the \texttt{bispectrum} library,
    each a drop-in \texttt{nn.Module} with uniform API.
    \texttt{SO2onS1} is the continuous-$n$ limit of \texttt{CnonCn}.}
  \label{tab:supported-groups}
  \centering
  \small
  \begin{tabular}{llll}
    \toprule
    Module & Group / Domain & Output mode & Reference \\
    \midrule
    \texttt{CnonCn} & $C_n$ on $C_n$ & selective + full & \citep{sanborn2023bispectral} \\
    \texttt{SO2onS1} & $\SO(2)$ on $S^1$ & selective + full & \citep{mataigne2024selective} \\
    \texttt{TorusDnTorus} & $\mathbb{T}^d$ & selective + full & \citep{mataigne2024selective} \\
    \texttt{DnonDn} & $D_n$ on $D_n$ & selective & \citep{mataigne2024selective} \\
    \texttt{SO2onDisk} & $\SO(2)$ on disk & selective & \citep{myers2025selective} \\
    \texttt{SO3onS2} & $\SO(3)$ on $S^2$ & selective + full & \citep{kakarala2012bispectrum}, this paper \\
    \texttt{OctaonOcta} & chiral octahedral $O$ & selective & \citep{mataigne2024selective} \\
    \bottomrule
  \end{tabular}
\end{table}

\begin{table}[t]
  \centering
  \caption{Selective forward pass is sub-millisecond for all modules;
    selectivity reduces coefficients by up to $512\times$.
    Median wall-clock, NVIDIA H100, batch\,=\,16.
    ``--'': not implemented.}
  \label{tab:benchmarks}
  \small
  \setlength{\tabcolsep}{3.5pt}
  \begin{tabular}{llrrrrr}
    \toprule
    Module & $G$ & $|G|/L$ & \multicolumn{2}{c}{Coefs} & \multicolumn{2}{c}{Fwd (ms)} \\
    \cmidrule(lr){4-5} \cmidrule(lr){6-7}
     & & & Sel. & Full & Sel. & Full \\
    \midrule
    \texttt{CnonCn} & $C_{128}$ & 128 & 128 & 8{,}256 & 0.14 & 8.90 \\
    \texttt{TorusOnTorus} & $C_{32}{\times}C_{32}$ & 1{,}024 & 1{,}024 & 524{,}800 & 0.07 & 0.31 \\
    \texttt{DnonDn} & $D_{32}$ & 64 & 245 & -- & 0.54 & -- \\
    \texttt{SO2onDisk} & $\mathrm{SO}(2)$ & $L{=}16$ & 105 & -- & 0.22 & -- \\
    \texttt{SO3onS2} & $\mathrm{SO}(3)$ & $L{=}16$ & 430 & -- & 0.48 & -- \\
    \texttt{OctaonOcta} & $O$ & 24 & 172 & -- & 0.68 & -- \\
    \bottomrule
  \end{tabular}
\end{table}

\paragraph{Profiling.} \label{sec:benchmarks} We profile the library on a single NVIDIA H100 80\,GB GPU
(PyTorch 2.10, batch\,=\,16 for the forward pass and 4 for inversion,
median wall-clock via \texttt{torch.utils.benchmark}); results are in
Table~\ref{tab:benchmarks}.
The selective forward pass is sub-millisecond for all six modules:
finite abelian groups ($C_n$: 0.14\,ms, $\mathbb{T}^2$: 0.07\,ms),
$D_n$ (0.54\,ms), $\mathrm{SO}(2)$ on the disk (0.22\,ms),
$\mathrm{SO}(3)$ on $S^2$ at $L{=}16$ (0.48\,ms), and the octahedral
group (0.68\,ms), making it negligible relative to backbone computation.
GPU throughput scales linearly with batch size, exceeding
$10^7$~samples/s for $C_n$ and $\mathbb{T}^2$
(Appendix~\ref{app:bench}). Six modules are presented; the seventh,
\texttt{SO2onS1}, subclasses \texttt{CnonCn} with $n = L{+}1$ and has
identical runtime to the corresponding \texttt{CnonCn} row at matched
$|G|$. Coefficient-count scaling and additional plots are in
Appendix~\ref{app:bench} (Figure~\ref{fig:coeff-scaling}).

\section{Augmented selective $\SO(3)$ invariant on $S^2$}
\label{sec:so3-selective}

This section instantiates the $G$-bispectrum framework of
Section~\ref{sec:background} on $S^2$ and is the main theoretical
contribution: we construct $\Phi_{\mathrm{sel}}$, an
$\SO(3)$-invariant of size $\Theta(L^2)$ on band-limited spherical
signals, and probe its completeness empirically.

\paragraph{Setup and construction.} A real signal $f : S^2 \to \R$
band-limited at degree~$L$ is determined by its spherical harmonic
coefficients $a_\ell^m\in\C$, $m=-\ell,\dots,\ell$, $\ell=0,\dots,L$,
gathered into vectors $\mathbf{F}_\ell = (a_\ell^{-\ell},\dots,
a_\ell^{\ell})^\top \in \C^{2\ell+1}$ that transform as
$\mathbf{F}_\ell \mapsto D^\ell(g)\,\mathbf{F}_\ell$ under
$g\in\SO(3)$ ($D^\ell$ the Wigner-$D$ matrix); the reality condition
leaves $(L{+}1)^2$ real degrees of freedom
(Appendix~\ref{sec:app-setup}).
A triple $(\ell_1,\ell_2,\ell)$ is \emph{CG-admissible} if it
satisfies the triangle inequality
$|\ell_1-\ell_2|\le\ell\le\ell_1+\ell_2$. The scalar bispectral
coefficient at such a triple is
\begin{equation}\label{eq:s6-bispectrum}
  \beta_{\ell_1,\ell_2,\ell}
  = \!\!\!\sum_{\substack{m_1,m_2,m\\ m_1+m_2=m}}\!\!\!
    \CG{\ell_1}{m_1}{\ell_2}{m_2}{\ell}{m}\,
    a_{\ell_1}^{m_1}\,a_{\ell_2}^{m_2}\,\conj{a_\ell^m},
\end{equation}
where $\CG{\ell_1}{m_1}{\ell_2}{m_2}{\ell}{m}$ is the Clebsch--Gordan
coefficient. The full set has $\Theta(L^3)$ scalars. The orbit space
$\R^{(L+1)^2}/\SO(3)$ has dimension $(L{+}1)^2 - 3$; by a rank
argument, any smooth complete $\SO(3)$-invariant must therefore have
at least $\Omega(L^2)$ live components
(Proposition~\ref{prop:app-lower-bound}). 

\paragraph{The augmented selective invariant.} We define
$\Phi_{\mathrm{sel}}(f)$ as the output of
Algorithm~\ref{alg:so3-augmented}
(Appendix~\ref{sec:app-selective-index}). Its index set is built
degree-by-degree. At low degrees $\ell\le L_{\mathrm{seed}}:=\min(L,4)$,
the \emph{seed} $\mathcal{S}_{\mathrm{seed}}$ collects every
CG-admissible triple that is non-vanishing on real signals (a
constant-size block: $24$ entries at $L_{\mathrm{seed}}=4$). For each
higher degree $\ell > L_{\mathrm{seed}}$, the algorithm appends a
\emph{bootstrap block} $\mathcal{T}_\ell$ of $2\ell{+}1$ scalar
bispectral entries linear in the new coefficients $\mathbf{F}_\ell$;
the self-couplings $\beta_{\ell,\ell,\ell'}$ for
$\ell'\in\{2,4,\dots\}$, $\ell'\le\ell$ (restricted to even $\ell'$
because odd $\ell'$ produces a repeated-index, odd-parity entry that
vanishes identically on real signals;  Proposition~\ref{prop:app-odd-vanishing}) and a greedily-chosen set
of CG-power scalars
$P_{\ell_1,\ell_2,\ell} := \lVert(\mathbf{F}_{\ell_1}\otimes
\mathbf{F}_{\ell_2})|_\ell\rVert^2$, added until the per-degree
Jacobian of $\Phi_{\mathrm{sel}}|_\ell$ has full rank.
Because $\Phi_{\mathrm{sel}}$ mixes degree-$3$ bispectrum entries
with degree-$4$ CG-power scalars, we call it an \emph{augmented
selective invariant} rather than a bispectrum; the term
``selective bispectrum'' refers strictly to its bispectral component
$\mathcal{S}_\beta$. The total size is
$|\Phi_{\mathrm{sel}}| = \Theta(L^2)$
(Proposition~\ref{prop:app-output-size}); concretely, at $L=15$
(resp.\ $L=16$) it produces $384$ (resp.\ $430$) real scalars vs.\
$2{,}056$ admissible triples in the full bispectrum at $L=15$
(Table~\ref{tab:app-counts}). Algorithm~\ref{alg:selective-bsp-so3}
below summarises the bispectral-triple selection; the full index
set, with even self-couplings and CG-power augmentation, is
Algorithm~\ref{alg:so3-augmented} in
Appendix~\ref{sec:app-selective-index}.

\begin{algorithm}[H]
\caption{Bispectral triples in the Mataigne-style BFS chain
  (mirrors~\citet{mataigne2024selective}, Alg.~1): seed block at low
  degrees plus per-degree bootstrap blocks. The full index set of
  $\Phi_{\mathrm{sel}}$, including even self-couplings
  $\beta_{\ell,\ell,\ell'}$ and CG-power scalars
  $P_{\ell_1,\ell_2,\ell}$, is built by
  Algorithm~\ref{alg:so3-augmented} in
  Appendix~\ref{sec:app-selective-index}.}
\label{alg:selective-bsp-so3}
\small
\begin{algorithmic}[1]
\STATE \textbf{Input:} band-limit $L$,
  seed cutoff $L_{\mathrm{seed}} = \min(L,4)$
\STATE $\mathcal{L}_\beta \leftarrow$ all admissible triples
  $(\ell_1,\ell_2,\ell)$ with $\ell\le L_{\mathrm{seed}}$
  \COMMENT{seed block $\mathcal{S}_{\mathrm{seed}}$}
\FOR{$\ell = L_{\mathrm{seed}}{+}1, \dotsc, L$}
  \STATE append the bootstrap block $\mathcal{T}_\ell$
    to $\mathcal{L}_\beta$
    \COMMENT{$2\ell{+}1$ entries linear in $\mathbf{F}_\ell$;
      tabulated in Appendix~\ref{sec:app-selective-index}}
\ENDFOR
\RETURN $\{\beta_{\ell_1,\ell_2,\ell}(f) :
  (\ell_1,\ell_2,\ell)\in\mathcal{L}_\beta\}$
\end{algorithmic}
\end{algorithm}

\paragraph{Empirical proof of concept: bispectrum completeness.}
\label{sec:so3-recon}
Whether $\Phi_{\mathrm{sel}}$ is a \emph{complete} $\SO(3)$-invariant
---i.e., whether $\Phi_{\mathrm{sel}}(f) = \Phi_{\mathrm{sel}}(g)$
implies $f, g$ lie in the same orbit---can be probed empirically by
attempting to invert it: if $\Phi_{\mathrm{sel}}$ is complete, any
signal $\hat f$ matching $\Phi_{\mathrm{sel}}(R\cdot f)$ must agree
with $R\cdot f$ up to a global rotation, so consistent recovery of
the input is operational evidence that no orbit information is lost.
We run this test on Spherical MNIST digits band-limited at
$L = 12$.\footnote{We use $L{=}12$ here rather than the classifier's
$L{=}15$ to keep the reconstruction residual safely below the SHT
discretization floor at the $64{\times}128$ grid; protocol details
are in Appendix~\ref{app:so3-recon}.} For each source signal
$f:S^2\to\R$ we draw a uniform random rotation $R\in\SO(3)$, compute
$\Phi_{\mathrm{sel}}(R\cdot f)$, and recover $\hat f$ by gradient
descent from a random Gaussian initialization, minimizing
$\lVert \Phi_{\mathrm{sel}}(\hat f) -
\Phi_{\mathrm{sel}}(R\cdot f)\rVert^2 / \lVert
\Phi_{\mathrm{sel}}(R\cdot f)\rVert^2$ (Adam, multi-restart,
cosine-annealed LR). The residual orbit ambiguity is then removed by
Procrustes alignment over $\SO(3)$ on a quaternion parameterization,
fitting $\hat R$ to minimize $\lVert \hat R\cdot\hat f -
R\cdot f\rVert^2$.

\begin{figure}[t]
  \centering
  \includegraphics[width=0.8\linewidth]{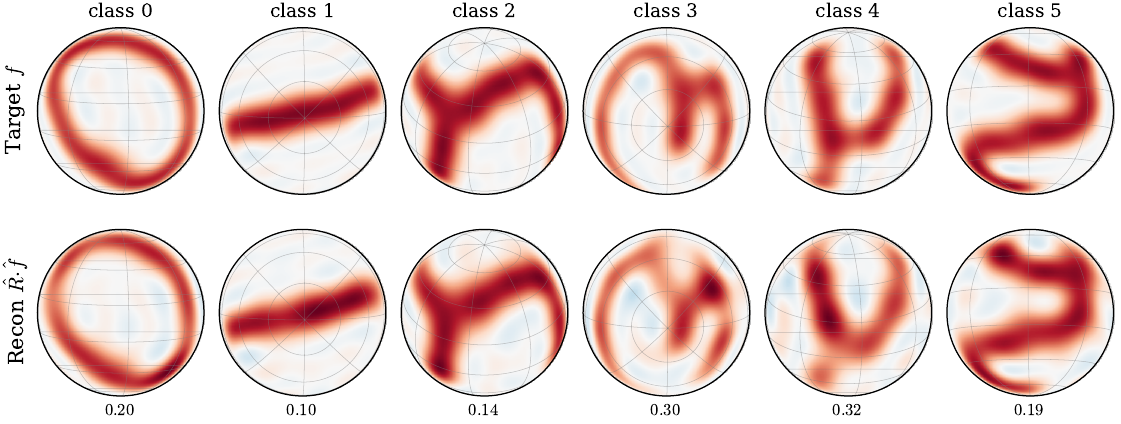}
  \caption{$\Phi_{\mathrm{sel}}$ preserves enough information to
    recover signals up to rotation ($L{=}12$). Top: targets.
    Bottom: reconstructions from $\Phi_{\mathrm{sel}}$ alone,
    Procrustes-aligned. Median image residual ${\approx}0.20$;
    bispectrum residuals ${\sim}10^{-3}$.}
  \label{fig:so3-reconstruction}
\end{figure}

\paragraph{Takeaways.}
Bispectrum residuals reach
$\lVert\Delta\Phi\rVert/\lVert\Phi\rVert \sim 10^{-4}$--$10^{-3}$
(below the SHT-discretization invariance noise floor of
${\sim}8\times 10^{-3}$ at this grid); the aligned image residuals
$\lVert\hat R\cdot\hat f - f\rVert/\lVert f\rVert \sim 0.1$--$0.3$
are consistent with local optimization error or conditioning effects,
and we did not observe a systematic missing-information failure mode,
but the experiment does not rule out incompleteness or near-collisions
in feature space. Visually, $\hat R\cdot\hat f$ aligns with $f$ up to
rotation across all six digit classes shown
(Figure~\ref{fig:so3-reconstruction}; the same behaviour holds across
our 8-digit sweep, see Appendix~\ref{app:so3-recon}). We read this as
proof-of-concept evidence that $\Phi_{\mathrm{sel}}$ preserves
substantial orbit information on band-limited signals; a formal
completeness proof remains open.

\section{Experiments}
\label{sec:experiments}

We evaluate the benefits of incorporating $G$-bispectra, as implemented in the library, as pooling layers of deep learning architectures. We consider three classical benchmarks spanning discrete and continuous
groups: 2D histopathology classification under $C_8$ (Section~\ref{sec:pcam}), 3D organ classification under octahedral group
(Section~\ref{sec:organ3d}), and spherical digit
classification under $\SO(3)$ (Section~\ref{sec:smnist}).

\subsection{2D histopathology classification under cyclic symmetry}
\label{sec:pcam}

\paragraph{Setup.} We evaluate five invariant pooling strategies on
PatchCamelyon~\citep{veeling2018rotation} (PCam), which consists of 327K histopathology
patches with binary metastasis labels. All equivariant models share a
$C_8$-equivariant DenseNet~\citep{huang2017densenet} backbone following
\citet{veeling2018rotation}; they differ only in the terminal invariant map:
norm pooling, gated pooling, Fourier-ELU, and selective bispectrum via
\texttt{CnonCn}$(n{=}8)$. A standard CNN trained with
data augmentation drawn uniformly from the group $C_8$
(``\emph{augmented CNN}'') serves as the non G-equivariant baseline. We
sweep growth rates to vary capacity (30K--1.6M params) and train on
$10\%$ of the PCam training set (an image-level subsample, not a
group-orbit subsample, drawn before any group augmentation) with
AdamW~\citep{loshchilov2019adamw}. Full model descriptions are in
Appendix~\ref{app:pcam-details}.

\begin{figure}[t]
  \centering
  \includegraphics[width=\linewidth]{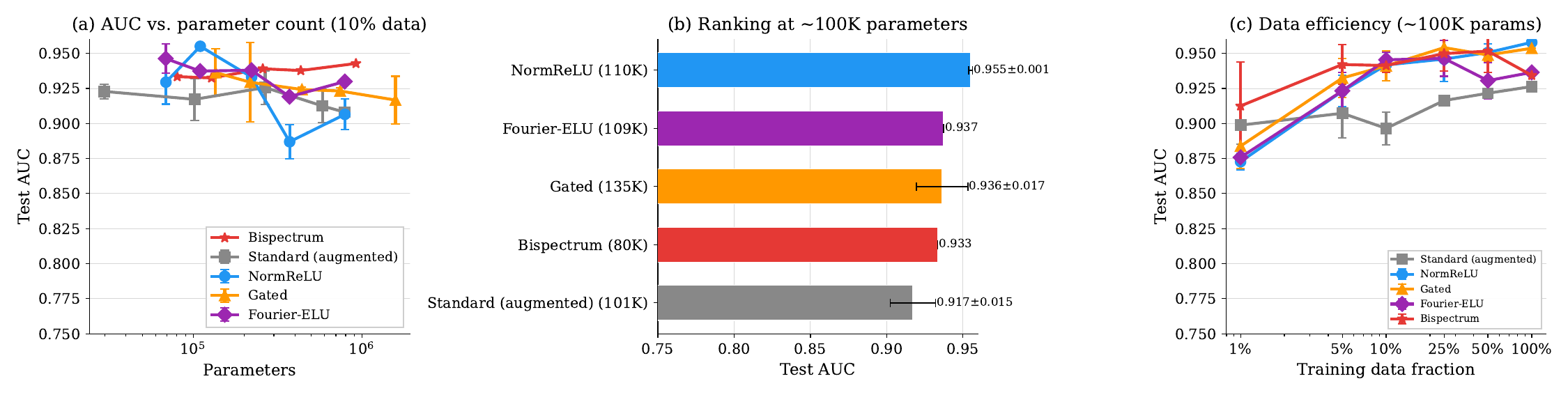}
  \caption{Bispectral pooling on PatchCamelyon ($C_8$, 10\% data).
    \textbf{(a)}~All equivariant methods dominate the augmented CNN;
    the bispectrum is most stable across capacities.
    \textbf{(b)}~At ${\sim}100$K params, methods cluster within
    $0.5$pp.
    \textbf{(c)}~At 1\% data the bispectrum leads ($0.912$ AUC),
    consistent with complete invariants compensating for scarce data.
    Multi-seed statistics in
    Table~\ref{tab:pcam-results} (Appendix~\ref{app:pcam-details}).}
  \label{fig:pcam-pareto}
\end{figure}

\paragraph{Results.}
Figure~\ref{fig:pcam-pareto} shows the full Pareto sweep (AUC vs.\
parameter count, ranking, data efficiency); multi-seed statistics at
matched ${\sim}100$K params are in
Table~\ref{tab:pcam-results} (Appendix~\ref{app:pcam-details}).
At matched capacity (${\sim}100$K params, 10\% data), all four equivariant
pooling methods substantially outperform the standard augmented CNN
(AUC $0.896$). Fourier-ELU leads slightly ($0.945$), followed by
NormReLU ($0.942$), bispectrum and gated (both $0.941$).
At 1\% data, the bispectrum leads all methods ($0.912$), outperforming
NormReLU ($0.873$) and Fourier-ELU ($0.876$). This data-efficiency
advantage narrows as the training set grows, consistent with complete
invariants providing a stronger inductive bias when data is scarce.

\subsection{3D organ classification under octahedral symmetry}
\label{sec:organ3d}

\paragraph{Setup.} To evaluate the \texttt{OctaonOcta} module for the chiral octahedral group
$O$ ($|O|=24$), we design a controlled 3D experiment on
OrganMNIST3D~\citep{medmnistv2} (1{,}742 abdominal CT volumes, $28^3$
voxels, 11 classes). All equivariant variants share an $O$-equivariant 3D
ResNet backbone with exact voxel-permutation kernels. We compare four models: a translation-equivariant standard 3D CNN trained with
octahedral data augmentation (the ``\emph{augmented CNN}'' baseline,
16K params); the $O$-equivariant ResNet with terminal norm pool
(375K params); with terminal max pool (374K params); and with terminal
selective bispectrum via \texttt{OctaonOcta} (463K, 172 invariant
scalars). Models are trained with AdamW~\citep{loshchilov2019adamw} for
100 epochs with three seeds; rotation robustness is measured over all
24 octahedral rotations of the test set. Published baselines and
wider-channel experiments are in Appendix~\ref{app:organ3d-details}.
We define the \emph{rotation robustness} $\sigma_{\mathrm{rot}}$ as the standard deviation of test accuracy when the entire test set is evaluated under every group rotation (24 octahedral rotations for OrganMNIST3D; 10 random $\SO(3)$ rotations for Spherical MNIST); a truly invariant model has $\sigma_{\mathrm{rot}}=0$.

\begin{figure}[t]
  \centering
  \includegraphics[width=0.60\linewidth]{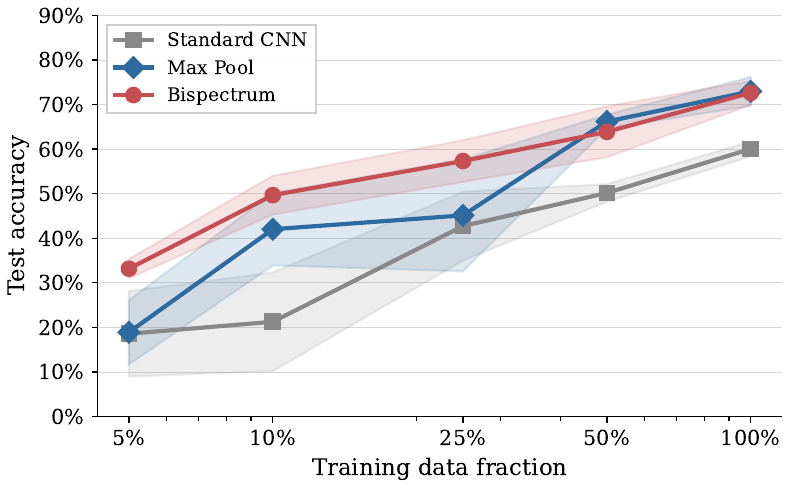}
  \caption{Data efficiency on OrganMNIST3D. At 5\% data, the
    bispectrum reaches $33.2\%$ vs.\ $18.9\%$ for max pool; the gap
    narrows as data increases. Mean over 3 seeds; shaded: $\pm 1$ std.}
  \label{fig:data-efficiency}
\end{figure}

\paragraph{Results.} Table~\ref{tab:organ3d-results} (Appendix~\ref{app:organ3d-details}) and Figure~\ref{fig:data-efficiency} report the key findings. All equivariant models achieve $\sigma_{\mathrm{rot}}\le 0.0005$
while the augmented CNN drops ${\sim}2.5$\,pp under rotation. The
bispectrum ($72.6\%$) matches max pool ($73.0\%$) and substantially
outperforms norm pool ($56.8\%$) and Regular SE(3)
convolutions~\citep{kuipers2023regular} ($69.8\%$).
At reduced data (Figure~\ref{fig:data-efficiency}), the bispectrum's
advantage widens: at 5\% data it reaches $33.2\%$ vs.\ $18.9\%$ for max
pool. At higher capacity (6M params,
Appendix~\ref{app:organ3d-details}), max pool overtakes the bispectrum,
suggesting the completeness advantage is specific to the data-limited
regime.

\subsection{Spherical MNIST: SO(3) invariance on the sphere}
\label{sec:smnist}

The preceding experiments validate bispectral pooling for finite groups
($C_8$ and the octahedral group $O$). To evaluate the library's continuous
group support, we test \texttt{SO3onS2}---the augmented selective
$\SO(3)$ invariant on the 2-sphere
(Section~\ref{sec:so3-selective})---on Spherical
MNIST~\citep{cohen2018spherical}, the standard benchmark for
rotation invariance on $S^2$.

\paragraph{Setup.}
MNIST digits are stereographically projected onto a
$64\times 128$ equiangular grid; we compute
$\Phi_{\mathrm{sel}}$ at $\ell_{\max}=15$ ($N=384$ live real
components, stored as 384 complex tensors by the library), apply
sign-preserving $\operatorname{sign}(x)\log(1+|x|)$ to real and
imaginary parts ($2N=768$ real channels; one channel per
bispectral entry is structurally zero by parity but we feed both
since the log transform is per-channel injective and cheap), and
feed a 3-layer MLP (232K params). Controls: a power-spectrum MLP at 11K
params (equally invariant but \emph{incomplete}, discarding
inter-harmonic phase), a matched 166K-param version isolating
capacity vs.\ completeness, and a non-invariant CNN directly on the
equirectangular image at 185K params.
Following \citet{cohen2018spherical}, we evaluate under three
protocols NR/NR, R/R, and the most challenging NR/R; each run uses
three seeds, and rotation robustness $\sigma_{\mathrm{rot}}$ is
measured over 10 random $\SO(3)$ rotations of the test set.

\begin{table}[t]
  \caption{Spherical MNIST (mean $\pm$ std, 3 seeds). The bispectrum
    reaches $95.0\%$ with $\sigma_{\mathrm{rot}} \sim 10^{-4}$; the
    power spectrum plateaus at ${\sim}79\%$ even at matched capacity,
    confirming incompleteness as the bottleneck. The standard CNN
    collapses to $13.8\%$ under unseen rotations (NR/R).
    Reference numbers from \citet{cohen2018spherical}.}
  \label{tab:smnist-results}
  \centering
  \small
  \setlength{\tabcolsep}{4pt}
  \begin{tabular}{lrcccc}
    \toprule
    Model & Params & NR/NR $\uparrow$ & R/R $\uparrow$ & NR/R $\uparrow$
      & $\sigma_{\mathrm{rot}}$ $\downarrow$ \\
    \midrule
    Cohen planar CNN & --- & $0.98$ & $0.23$ & $0.11$ & --- \\
    Cohen spherical CNN & --- & $0.96$ & $0.95$ & $0.94$ & --- \\
    \midrule
    Standard CNN & 185K
      & $\mathbf{0.991 {\pm} 0.000}$ & $0.781 {\pm} 0.003$ & $0.138 {\pm} 0.003$
      & $0.044$ \\
    Power Spec.\ + MLP & 11K
      & $0.780 {\pm} 0.002$ & $0.780 {\pm} 0.002$ & $0.777 {\pm} 0.001$
      & $3{\times}10^{-4}$ \\
    Power Spec.\ + MLP (matched) & 166K
      & $0.794 {\pm} 0.011$ & $0.794 {\pm} 0.011$ & $0.792 {\pm} 0.010$
      & $3{\times}10^{-4}$ \\
    \textbf{\texttt{SO3onS2} Bispectrum + MLP (ours)} & 232K
      & $0.950 {\pm} 0.000$ & $\mathbf{0.949 {\pm} 0.002}$
      & $\mathbf{0.951 {\pm} 0.001}$ & $\mathbf{1{\times}10^{-4}}$ \\
    \bottomrule
  \end{tabular}
\end{table}

\begin{figure}[t]
  \centering
  \includegraphics[width=0.9\linewidth]{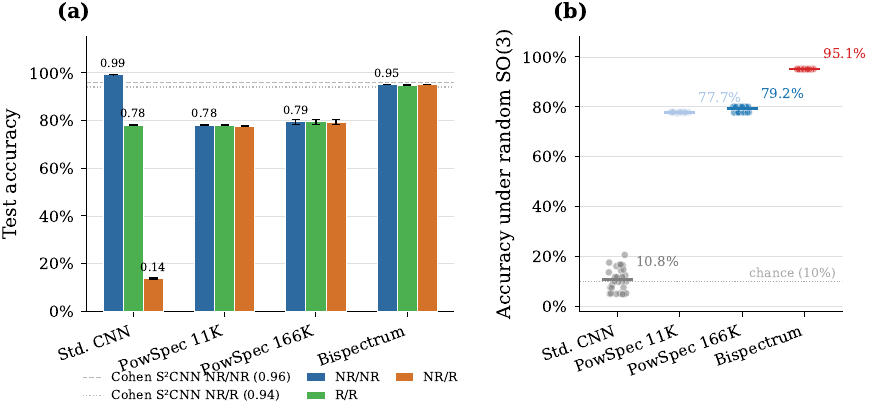}
  \caption{Completeness, not capacity, drives the accuracy gap.
    \textbf{(a)}~The bispectrum nearly matches Cohen's spherical CNN
    across all protocols; the power spectrum plateaus ${\sim}17$pp lower.
    \textbf{(b)}~Under random $\SO(3)$ rotations, scaling the power
    spectrum MLP from 11K to 166K yields only ${+}1.4$pp.}
  \label{fig:smnist-analysis}
\end{figure}
\paragraph{Results.}
Table~\ref{tab:smnist-results} and
Figure~\ref{fig:smnist-analysis} report the results.
The bispectrum achieves near-identical accuracy across all three
protocols ($\sigma_{\mathrm{rot}}{\sim}10^{-4}$), confirming true
$\SO(3)$ invariance; the standard CNN collapses from $99.1\%$
(NR/NR) to $13.8\%$ (NR/R), showing that augmentation alone does not
confer invariance. The power spectrum is equally invariant but
reaches only $78\%$---a $17$pp deficit that persists after scaling
the MLP to $166$K params (${+}1.4$pp), consistent with the power
spectrum's incompleteness (rather than MLP capacity) being the
bottleneck. Cohen's spherical CNN~\citep{cohen2018spherical} reaches
$96\%/95\%/94\%$ with learned equivariant convolutions; our
single-pass bispectrum ($95.0\%/94.9\%/95.1\%$) is $1$--$2$pp lower
but has a tighter invariance gap ($0.1$pp vs.\ $2.0$pp).

\section{Discussion and Conclusion}
\label{sec:discussion}

We have presented \texttt{bispectrum}, a PyTorch library that makes
selective $G$-bispectral invariants practical, covering seven
group actions through a uniform \texttt{nn.Module} interface.
 We contribute a $\Theta(L^2)$ selective $\SO(3)$ invariant on
$S^2$ (Section~\ref{sec:so3-selective}). The selective forward pass
adds negligible overhead ($<$1\,ms for finite-group modules) and
delivers numerically exact rotation invariance with a consistent
data-efficiency advantage over incomplete alternatives in the
low-data/moderate-capacity regime.

\paragraph{When not to use the bispectrum.}
At high model capacity (Sec.~\ref{sec:experiments}), incomplete
alternatives like max pooling can match or surpass bispectral
pooling; each module is invariant to a \emph{specific} group
only. Data augmentation or flexible-symmetry equivariant networks
fit better when the relevant symmetry is multi-group or unknown.

\paragraph{Limitations and future work.}
Inversion from $\Phi_{\mathrm{sel}}$ is not shipped as a library
API, and a formal completeness proof, a characterization of the
degenerate set, and numerical-conditioning analysis at large
band-limits remain open. We do not compare against tensor-product
architectures (e3nn, MACE~\citep{batatia2022mace}) or wavelet scattering.

\begin{ack}
Omitted for review.
\end{ack}

\bibliographystyle{plainnat}
\bibliography{references}

\appendix

\section{Selective $\SO(3)$ Bispectrum -- Structural Background}
\label{app:so3-proof}

This appendix gives the structural background for the construction
of Section~\ref{sec:so3-selective}: the $S^2$ band-limited setup and
notation (Section~\ref{sec:app-setup}); the orbit-space dimension
lower bound $(L{+}1)^2-3$ that motivates the $\Theta(L^2)$ target
(Section~\ref{sec:app-lower-bound}); the explicit selective index set
together with the implementation-grade BFS variant
(Algorithm~\ref{alg:so3-augmented}) and per-degree counting result
(Section~\ref{sec:app-selective-index}); and the parity structure of
the scalar bispectrum that dictates the exclusion of certain
odd-parity triples in $\mathcal{T}_\ell$
(Section~\ref{sec:app-parity}). A short comparison with prior bispectral
constructions (Section~\ref{sec:app-comparison}) and a pixel-space
complexity table (Section~\ref{sec:app-pixel-complexity}) close the
appendix. We do not claim a closed-form completeness proof in this
revision; the empirical reconstruction of
Figure~\ref{fig:so3-reconstruction} is the proof-of-concept evidence
on which the construction is supported.

Per-degree entry allocation: the full seed
$\mathcal{S}_{\mathrm{seed}}$ handles $\ell\le L_{\mathrm{seed}}=4$
in constant size; for $\ell\ge 5$, Algorithm~\ref{alg:so3-augmented}
allocates $2\ell{+}1$ bispectral rows plus $O(\ell)$
self-coupling and CG-power entries, giving
$|\Phi_{\mathrm{sel}}|=\Theta(L^2)$ total (matching the orbit-space
lower bound $(L{+}1)^2-3$ of
Proposition~\ref{prop:app-lower-bound}; exact per-$L$ counts in
Table~\ref{tab:app-counts}).

\subsection{Setup and notation}
\label{sec:app-setup}

\begin{definition}[Band-limited signal on $S^2$]
A signal $f : S^2 \to \R$ is \emph{band-limited} at degree $L$ if its
spherical harmonic expansion
\begin{equation}\label{eq:app-sh-expansion}
  f(\theta, \varphi)
  = \sum_{\ell=0}^{L} \sum_{m=-\ell}^{\ell}
    a_\ell^m \, \SH_\ell^m(\theta, \varphi)
\end{equation}
has $a_\ell^m = 0$ for all $\ell > L$. We write
$\mathbf{F}_\ell = (a_\ell^{-\ell}, \dotsc, a_\ell^{\ell})
  \in \C^{2\ell+1}$
for the coefficient vector at degree~$\ell$.
\end{definition}

\noindent
For real $f$, the conjugation symmetry
$a_\ell^{-m} = (-1)^m \conj{a_\ell^m}$ holds, so
$\mathbf{F}_\ell$ has $2\ell+1$ real degrees of freedom.
The total signal dimension is
$\dim = \sum_{\ell=0}^{L}(2\ell+1) = (L+1)^2$.

\begin{definition}[Rotation action]
Under $g \in \SO(3)$, the spherical harmonic coefficients transform as
$\mathbf{F}_\ell \mapsto D^\ell(g)\,\mathbf{F}_\ell$,
where $D^\ell$ is the $(2\ell+1)$-dimensional Wigner~$D$-matrix.
\end{definition}

\noindent
In the irrep notation of~\citet{mataigne2024selective},
$\{\rho_\ell\}_{\ell\ge 0}$ are the (unitary) irreducible
representations of $\SO(3)$ with $V_{\rho_\ell}$ the
$(2\ell{+}1)$-dimensional space of degree-$\ell$ spherical harmonics
and $\rho_\ell(g)=D^\ell(g)$; the Fourier coefficient of $f$ at
$\rho_\ell$ is $\mathcal{F}(f)_{\rho_\ell}=\mathbf{F}_\ell$. We use
$\mathbf{F}_\ell$ as the working symbol throughout.

\begin{definition}[Bispectrum on $S^2$]\label{def:app-bispectrum}
For each triple $(\ell_1, \ell_2, \ell)$ satisfying the triangle
inequality $\abs{\ell_1 - \ell_2} \le \ell \le \ell_1 + \ell_2$,
the \emph{bispectral coefficient} is
\begin{equation}\label{eq:app-bispectrum}
  \beta_{\ell_1,\ell_2,\ell}
  = \sum_{\substack{m_1,\,m_2,\,m \\ m_1+m_2=m}}
    \CG{\ell_1}{m_1}{\ell_2}{m_2}{\ell}{m}\;
    a_{\ell_1}^{m_1}\,a_{\ell_2}^{m_2}\,\conj{a_\ell^m},
\end{equation}
where $\CG{\ell_1}{m_1}{\ell_2}{m_2}{\ell}{m}$ denotes the
Clebsch--Gordan coefficient
$\langle \ell_1, m_1;\, \ell_2, m_2 \mid \ell, m \rangle$.
Each $\beta_{\ell_1,\ell_2,\ell}$ is a scalar; for real~$f$ it is
real when $\ell_1{+}\ell_2{+}\ell$ is even and purely imaginary when
$\ell_1{+}\ell_2{+}\ell$ is odd (Lemma~\ref{lem:app-parity}).
\end{definition}

Definition~\ref{def:app-bispectrum} is the specialization of the
general $G$-bispectrum (eq.~\eqref{eq:bispectrum-def}) to the case
$G = \SO(3)$ acting on~$S^2$, where the irreps are Wigner-$D$
matrices and the Fourier coefficients are spherical harmonic
coefficients~$a_\ell^m$.

\begin{definition}[CG power spectrum, optional diagnostic]\label{def:app-cg-power}
For a triple $(\ell_1, \ell_2, \ell)$ satisfying the triangle
inequality, write
$(\mathbf{F}_{\ell_1} \otimes \mathbf{F}_{\ell_2})\big|_\ell$
for the projection of the Kronecker product
$\mathbf{F}_{\ell_1} \otimes \mathbf{F}_{\ell_2}$ onto the
$D^\ell$-isotypic component, realised by contracting with the CG
coefficients $\CG{\ell_1}{m_1}{\ell_2}{m_2}{\ell}{m}$. The
\emph{CG power spectrum entry} is the squared norm of this projection,
\begin{equation}\label{eq:app-cg-power}
  P_{\ell_1,\ell_2,\ell}
  := \lVert (\mathbf{F}_{\ell_1} \otimes \mathbf{F}_{\ell_2})|_\ell
     \rVert^2
  = \sum_{m=-\ell}^{\ell}
    \Bigl\lvert \sum_{\substack{m_1,m_2 \\ m_1+m_2=m}}
      \CG{\ell_1}{m_1}{\ell_2}{m_2}{\ell}{m}\;
      a_{\ell_1}^{m_1}\, a_{\ell_2}^{m_2} \Bigr\rvert^2.
\end{equation}
This is a degree-$4$, real-valued, $\SO(3)$-invariant polynomial in
the signal coefficients. The library appends a $O(L^2)$ family of CG
power scalars to the selective bispectrum unconditionally for
numerical robustness on the real-signal slice
(Remark~\ref{rem:app-Tell-rank-deficit}); they leave the asymptotic
output size unchanged.
\end{definition}

\begin{proposition}[Size of the full bispectrum]\label{prop:app-full-size}
The number of admissible triples $(\ell_1, \ell_2, \ell)$ with
$0 \le \ell_1, \ell_2, \ell \le L$ satisfying the triangle inequality
$|\ell_1 - \ell_2| \le \ell \le \ell_1 + \ell_2$ is $\Theta(L^3)$.
\end{proposition}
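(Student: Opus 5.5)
The plan is to sandwich the count $N(L)$ of admissible triples between two cubic bounds. Let $N(L)$ be the number of integer triples $(\ell_1,\ell_2,\ell)\in\{0,\dots,L\}^3$ with $|\ell_1-\ell_2|\le \ell\le \ell_1+\ell_2$.

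\emph{Upper bound.} Since every admissible triple lies in the cube $\{0,\dots,L\}^3$, we get $N(L)\le (L+1)^3 = O(L^3)$ immediately.

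\emph{Lower bound.} Here I will exhibit a subfamily of size $\Omega(L^3)$. Take $\ell_1,\ell_2\in\{\lceil L/2\rceil,\dots,L\}$, so that $|\ell_1-\ell_2|\le \lfloor L/2\rfloor$ and $\ell_1+\ell_2\ge L$. Then every $\ell$ with $\lfloor L/2\rfloor\le \ell\le L$ satisfies the triangle inequality. This gives at least $(\lfloor L/2\rfloor+1)^3\ge (L/2)^3$ admissible triples, hence $N(L)=\Omega(L^3)$.

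If an exact count is wanted, I would instead fix $(\ell_1,\ell_2)$ and count admissible $\ell$ as $\min(L,\ell_1+\ell_2)-|\ell_1-\ell_2|+1$, then sum over the two regions $\ell_1+\ell_2\le L$ and $\ell_1+\ell_2>L$. This yields a cubic polynomial in $L$ with leading coefficient $1/2$, since the admissible region occupies half the volume of the unit cube. The only mild obstacle is bookkeeping the $\min$ at the boundary $\ell_1+\ell_2=L$, and the sandwich argument above avoids it entirely. One could also cross-check against the value $2{,}056$ at $L=15$ quoted in Section~\ref{sec:so3-selective}, though that figure may reflect additional conventions such as excluding vanishing triples or ordering $\ell_1\le\ell_2$. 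These conventions only change the count by a constant factor and do not affect the $\Theta(L^3)$ conclusion.
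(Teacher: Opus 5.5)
Your argument is correct, and your main route differs from the paper's. The paper fixes $(\ell_1,\ell_2)$, counts the $\min(\ell_1+\ell_2,L)-|\ell_1-\ell_2|+1$ admissible values of $\ell$, and asserts that summing over $[0,L]^2$ gives $\Theta(L^3)$. That is exactly the ``exact count'' alternative in your last paragraph, and the paper never carries out the sum.

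Your sandwich takes the trivial upper bound from the cube and an explicit lower bound from the sub-box $\ell_1,\ell_2\in\{\lceil L/2\rceil,\dots,L\}$, $\ell\in\{\lfloor L/2\rfloor,\dots,L\}$. Since the upper bound is trivial either way, the real content is the lower bound. The paper leaves it implicit, while you make it explicit with no boundary bookkeeping.

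What the summation route gives in return is the exact value. The $\ell_1+\ell_2\le L$ part of $\sum\min(\ell_1+\ell_2,L)$ equals $L(L+1)(L+2)/3$, which cancels against $\sum|\ell_1-\ell_2|$. The remainder is $N(L)=\tfrac12(L+1)(L^2+2L+2)$, which confirms your leading constant $1/2$. At $L=15$ this gives $8\cdot 257=2{,}056$ exactly. So your hedge about that figure is unnecessary: the paper's $2{,}056$ is the plain count of ordered triples, with no parity pruning and no $\ell_1\le\ell_2$ restriction.
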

\begin{proof}
For fixed $\ell_1$ and $\ell_2$, the triangle inequality admits
$\min(\ell_1 + \ell_2, L) - |\ell_1 - \ell_2| + 1$ valid values of
$\ell$. Summing over $(\ell_1,\ell_2)\in[0,L]^2$ gives
$N_{\mathrm{full}} = \Theta(L^3)$.
\end{proof}

\begin{remark}[Invariance]
Each $\beta_{\ell_1,\ell_2,\ell}$ is $\SO(3)$-invariant: the Wigner
matrices satisfy $D^{\ell_1} \otimes D^{\ell_2}
= \bigoplus_\ell D^\ell$ under the CG decomposition, so the triple
product contracted with CG coefficients is unchanged by
$\mathbf{F}_\ell \mapsto D^\ell(g)\mathbf{F}_\ell$.
\end{remark}

\subsection{Why $\mathcal{O}(L)$ is impossible}
\label{sec:app-lower-bound}

\begin{proposition}[Dimensional lower bound]\label{prop:app-lower-bound}
Any \emph{smooth or polynomial} complete $\SO(3)$-invariant for
band-limited signals on $S^2$ at band-limit $L$ must have codomain
of dimension at least $(L+1)^2 - 3$.
\end{proposition}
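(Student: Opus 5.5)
A complete invariant $\Phi : V \to \R^N$ is injective on orbits, so it induces an injective map from the orbit space $V/\SO(3)$ into $\R^N$, where $V=\R^{(L+1)^2}$ is the real signal space. The plan is to show that this orbit space contains a smooth manifold of dimension $(L+1)^2-3$. A smooth map that is injective on that manifold must then have generic rank equal to its dimension, which forces $N \ge (L+1)^2-3$.

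\textbf{Step 1 (free orbits of maximal dimension).} First I will show that for $L\ge 1$ the set $V_{\mathrm{free}}$ of signals with finite stabilizer is nonempty, open and dense. Openness of this principal-orbit-type stratum is a standard consequence of the slice theorem for compact group actions. For nonemptiness it suffices to exhibit one signal whose stabilizer is finite. For $L\ge 2$ a generic $\mathbf{F}_2$ is a traceless symmetric $3\times 3$ matrix with distinct eigenvalues, so its stabilizer is already finite (the Klein four-group). For $L=1$ every nonzero vector has stabilizer $\SO(2)$, so the stated bound must be read for $L\ge 2$, or with the usual convention $\dim V - \max_x \dim(\text{orbit})$. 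On $V_{\mathrm{free}}$ every orbit is a $3$-dimensional embedded submanifold.

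\textbf{Step 2 (local quotient).} On $V_{\mathrm{free}}$ the slice theorem yields, around a point $x_0$, a slice $S$ of dimension $(L+1)^2-3$ transverse to the orbit. Two points of $S$ lie in the same orbit only if they are related by the finite stabilizer $\Gamma=G_{x_0}$. After shrinking $S$ to an open set on which $\Gamma$ acts freely, or simply passing to the principal stratum where $\Gamma$ acts trivially, I obtain an embedded submanifold $S'$ of dimension $d=(L+1)^2-3$ whose points lie in pairwise distinct orbits.

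\textbf{Step 3 (injectivity forces rank).} Completeness of $\Phi$ makes $\Phi|_{S'}$ injective. This is the main obstacle: a merely smooth injective map can drop rank at some points (think of $x\mapsto x^3$), so "injective implies immersion" is false pointwise. I will get around this using generic rank. Let $r$ be the maximal rank of $d\Phi|_{S'}$; the set where this rank is attained is open. On it the constant rank theorem gives local fibers that are submanifolds of dimension $d-r$, and injectivity forces $d-r=0$. Hence $r=d$, and since $r\le N$ we get $N\ge d=(L+1)^2-3$.

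For polynomial $\Phi$ I would instead use a transcendence-degree argument. The orbit-separating invariants generate a field of transcendence degree $\dim V - \max\dim(\text{orbit}) = d$, and $N$ polynomials generate transcendence degree at most $N$. Alternatively, polynomial maps are smooth, so Step 3 already covers this case.

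The paper's phrase "live components" matches this rank formulation: what the argument bounds is the rank of the Jacobian, which in turn bounds the codomain dimension.
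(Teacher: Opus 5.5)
Your argument is correct and is essentially the paper's generic-rank argument. The paper takes the open set where $d\Phi$ attains its maximal rank $r$, applies the constant rank theorem, and asserts that orbit separation forces $r\ge (L+1)^2-3$; you make that last step explicit by restricting to a slice of dimension $(L+1)^2-3$ whose points lie in distinct orbits, and then using injectivity there.

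Two small remarks. Your caveat about $L=1$ is unnecessary, since only the inequality $\dim(\SO(3)\cdot x)\le 3$ is needed: a slice at a point of maximal orbit dimension has dimension at least $(L+1)^2-3$ for every $L$. Also, because your argument is local, it covers the paper's generic notion of completeness (orbit separation only on an open dense $U$) once the base point is chosen in $U$.
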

\begin{proof}
Let $\Phi : \R^{(L+1)^2} \to \R^k$ be a smooth (or polynomial)
$\SO(3)$-invariant map that separates orbits on a Zariski-open
dense subset $U\subset\R^{(L+1)^2}$. Smoothness implies $\Phi$ has a
well-defined generic rank $r := \max_{x\in U} \mathrm{rank}\,
d\Phi_x$; on the open dense subset where this rank is attained,
$\Phi$ is locally a submersion onto an $r$-dimensional submanifold,
and orbit-separation forces $r \ge \dim(\R^{(L+1)^2}/\SO(3))
= (L+1)^2 - \dim\SO(3) = (L+1)^2 - 3$
(generic-rank theorem for $\SO(3)$ acting on $V_L$;
cf.~Tarski--Seidenberg in the polynomial case). Hence
$k \ge r \ge (L+1)^2 - 3 = \Omega(L^2)$.
\end{proof}

\begin{remark}
The chain $\beta_{1,\ell,\ell+1}$ for $\ell = 0, \dotsc, L-1$ provides
only $\sim 3L$ scalar constraints (since each chain entry couples
$\mathbf{F}_1 \in \C^3$ with one known $\mathbf{F}_\ell$). This cannot
recover $(L+1)^2 \approx L^2$ unknowns.
\end{remark}

\begin{remark}[Smoothness assumption]
The smoothness/polynomial qualifier is essential: pathological
non-measurable invariants (e.g.\ a fixed orbit-section indexed by
the axiom of choice) can in principle have codomain dimension $1$
but are not implementable. All practical bispectral or spectral
invariants are polynomial, so the proposition applies.
\end{remark}

\subsection{The selective index set}
\label{sec:app-selective-index}

For each target degree $\ell$, the library's selective invariant
allocates $2\ell{+}1$ linear bispectral entries, plus quadratic/cubic
self-couplings and CG-power scalars for numerical robustness on the
real-signal slice. Combined with a constant-size full-bispectrum seed
at degrees $\le L_{\mathrm{seed}}=\min(L,4)$, this gives an
$O(L^2)$-sized invariant matching the orbit-space lower bound of
Proposition~\ref{prop:app-lower-bound}.

\begin{definition}[Augmented selective $\SO(3)$-on-$S^2$ invariant]\label{def:app-selective}
Set $L_{\mathrm{seed}} := \min(L,4)$. Let
$\mathcal{S}_{\mathrm{seed}}$ denote the bispectral triples at
degrees $\ell\le L_{\mathrm{seed}}$ returned by
Algorithm~\ref{alg:so3-augmented}'s seed phase: the pre-prune set
collects all admissible triples
$\{(\ell_1,\ell_2,\ell):
0\le\ell_1\le\ell_2\le L_{\mathrm{seed}},\,
|\ell_1{-}\ell_2|\le\ell\le\min(\ell_1{+}\ell_2,L_{\mathrm{seed}})\}$
($42$ triples at $L_{\mathrm{seed}}=4$); Algorithm~\ref{alg:so3-augmented}
then parity-prunes entries that vanish identically on real signals
(Lemma~\ref{lem:app-parity}), deduplicates, and caps each degree
to its per-degree budget. The surviving bispectral count at $L=4$
is $24$ (see Table~\ref{tab:app-counts}, column ``Bispectral''),
and $|\mathcal{S}_{\mathrm{seed}}|$ is constant in $L$ for $L\ge 4$.
For each intermediate target $4\le\ell\le 7$ Algorithm~\ref{alg:so3-augmented}
uses the small explicit linear block $\mathcal{T}_\ell^{\mathrm{small}}$
(Table~\ref{tab:Tsmall}); for each high-degree target $\ell\ge 8$
the \emph{bootstrap block} $\mathcal{T}_\ell$ is
\begin{equation}\label{eq:app-Tell}
\begin{aligned}
  \mathcal{T}_\ell
  := &\ \{(a,\ell,\ell-a) : 1\le a \le \ell{-}1\} \\
   &\cup \bigl\{(a,\ell,\ell-a+1) : 2\le a \le \ell{-}1,
        \; a \ne (\ell+1)/2 \bigr\} \\
   &\cup \{(a,\ell-a,\ell) : 1\le a \le 4\} \;\cup\; \mathcal{Z}_\ell,
\end{aligned}
\end{equation}
where $\mathcal{Z}_\ell = \emptyset$ for even $\ell$ and
$\mathcal{Z}_\ell = \{(2,\ell{-}1,\ell)\}$ for odd $\ell\ge 9$. The
exclusion $a\ne (\ell+1)/2$ for odd $\ell$ removes the self-conjugate
triple $(r,2r-1,r)$ at $\ell = 2r-1$, which has a repeated index and
vanishes identically on real signals
(Proposition~\ref{prop:app-odd-vanishing}); the compensating row
$(2,\ell{-}1,\ell)\in\mathcal{Z}_\ell$ has all-distinct indices and
does not overlap the chain family (which only covers pairs
$(a,\ell{-}a)$ with $a\le 4$). Size bookkeeping:
$(\ell{-}1)+(\ell{-}2)+4 = 2\ell{+}1$ for even $\ell$, and
$(\ell{-}1)+(\ell{-}3)+4+1 = 2\ell{+}1$ for odd $\ell\ge 9$; the first
two families differ in the last index ($\ell{-}a$ vs.\ $\ell{-}a+1$),
and $\mathcal{Z}_\ell$ at $a=2$ has second coordinate $\ell{-}1$
rather than $\ell{-}2$, so all three subfamilies are disjoint.

The \emph{augmented selective $\SO(3)$-on-$S^2$ invariant}
$\Phi_{\mathrm{sel}}$ is the output of
Algorithm~\ref{alg:so3-augmented}: bispectral triples
$\mathcal{S}_\beta = \mathcal{S}_{\mathrm{seed}}\cup
\bigcup_{\ell=5}^{L}\mathcal{T}_\ell$ (with
$\mathcal{T}_\ell = \mathcal{T}_\ell^{\mathrm{small}}$ for
$5\le\ell\le 7$), mandatory even self-couplings
$\beta_{\ell,\ell,\ell'}$ ($\ell'$ even, $2\le\ell'\le\ell$), and
CG-power scalars $P_{\ell_1,\ell_2,\ell} = \lVert(\mathbf{F}_{\ell_1}
\otimes \mathbf{F}_{\ell_2})|_\ell\rVert^2$
(Definition~\ref{def:app-cg-power}) chosen greedily for per-degree
Jacobian rank. Mathematically, $\Phi_{\mathrm{sel}}:V_L\to\R^N$ has
$N$ live real scalar components: by
Lemma~\ref{lem:app-parity}, each bispectral entry contributes one
real component on the real-signal slice (real for even parity,
purely imaginary for odd), and CG-power entries are real-valued
degree-$4$ invariants. The PyTorch implementation stores each
scalar in a complex tensor, so downstream layers receive $2N$ real
channels after splitting real and imaginary parts; for bispectral
entries exactly one of the two channels is structurally zero by
parity, and for CG-power entries the imaginary channel is
structurally zero.
\end{definition}

\begin{table}[h]
  \caption{Explicit low-degree bootstrap blocks
    $\mathcal{T}_\ell^{\mathrm{small}}$ for $4\le\ell\le 7$, copied
    from \texttt{bispectrum.so3\_on\_s2.\_small\_linear\_bootstrap\_block}.
    Each triple $(\ell_1,\ell_2,\ell_3)$ denotes
    $\beta_{\ell_1,\ell_2,\ell_3}$; all are admissible and have
    $\ell$ in at least one slot. Entry counts
    $|\mathcal{T}_4^{\mathrm{small}}|=10$,
    $|\mathcal{T}_5^{\mathrm{small}}|=11$,
    $|\mathcal{T}_6^{\mathrm{small}}|=13$,
    $|\mathcal{T}_7^{\mathrm{small}}|=15$ match the per-degree budgets.
    $\mathcal{T}_4^{\mathrm{small}}$ is implementation-only:
    Algorithm~\ref{alg:so3-augmented} uses it as a ranked candidate
    list at $\ell=4$, deduplicated against the seed
    $\mathcal{S}_{\mathrm{seed}}$ and capped to a budget of $10$ entries;
    $\mathcal{T}_5^{\mathrm{small}},\mathcal{T}_6^{\mathrm{small}},\mathcal{T}_7^{\mathrm{small}}$
    are the formal bootstrap blocks at $\ell\in\{5,6,7\}$.}
  \label{tab:Tsmall}
  \centering
  \small
  \setlength{\tabcolsep}{4pt}
  \footnotesize
  \begin{tabular}{@{}cp{0.92\linewidth}@{}}
    \toprule
    $\ell$ & $\mathcal{T}_\ell^{\mathrm{small}}$ (each entry a triple $(\ell_1,\ell_2,\ell_3)$) \\
    \midrule
    $4$ & $(1,3,4),$ $(2,2,4),$ $(2,3,4),$ $(3,3,4),$ $(1,4,3),$ $(2,4,2),$ $(3,4,1),$ $(2,4,3),$ $(3,4,2),$ $(3,4,3)$ \\
    $5$ & $(1,4,5),$ $(2,3,5),$ $(2,4,5),$ $(3,4,5),$ $(1,5,4),$ $(2,5,3),$ $(3,5,2),$ $(4,5,1),$ $(2,5,4),$ $(3,5,4),$ $(4,5,4)$ \\
    $6$ & $(1,5,6),$ $(2,4,6),$ $(3,3,6),$ $(3,4,6),$ $(1,6,5),$ $(2,6,4),$ $(3,6,3),$ $(4,6,2),$ $(5,6,1),$ $(2,6,5),$ $(3,6,5),$ $(4,6,5),$ $(5,6,5)$ \\
    $7$ & $(1,6,7),$ $(2,5,7),$ $(3,4,7),$ $(4,5,7),$ $(1,7,6),$ $(2,7,5),$ $(3,7,4),$ $(4,7,3),$ $(5,7,2),$ $(6,7,1),$ $(2,7,6),$ $(3,7,6),$ $(4,7,6),$ $(5,7,6),$ $(6,7,6)$ \\
    \bottomrule
  \end{tabular}
\end{table}

\begin{table}[h]
  \caption{Coefficient counts of $\Phi_{\mathrm{sel}}$ at
    representative band-limits. ``Bispectral'' counts the triples in
    $\mathcal{S}_\beta$ (including mandatory even self-couplings);
    ``CG-power'' counts entries in $\mathcal{S}_P$; ``Total $N$'' is
    $|\Phi_{\mathrm{sel}}|$ (live real components) and matches the
    library's \texttt{output\_size}. ``Stored channels'' counts the
    complex-tensor storage ($2N$) seen by the SMNIST classifier;
    one channel per bispectral entry is structurally zero by
    parity, and the imaginary channel of each CG-power entry is
    structurally zero.}
  \label{tab:app-counts}
  \centering
  \small
  \setlength{\tabcolsep}{6pt}
  \begin{tabular}{ccccc}
    \toprule
    $L$ & Bispectral & CG-power & Total $N=|\Phi_{\mathrm{sel}}|$
        & Stored channels ($2N$) \\
    \midrule
    $4$  & $24$  & $10$  & $34$  & $68$  \\
    $5$  & $37$  & $17$  & $54$  & $108$ \\
    $15$ & $307$ & $77$  & $384$ & $768$ \\
    $16$ & $348$ & $82$  & $430$ & $860$ \\
    \bottomrule
  \end{tabular}
\end{table}
\clearpage
\begin{algorithm}[H]
\caption{Augmented selective index set for the $\SO(3)$ bispectrum
  on $S^2$ at band-limit~$L$. Returns bispectral triples
  $\mathcal{S}_\beta$ and CG power triples $\mathcal{S}_P$.
  This extends the BFS-based selective index construction of
  \citet{mataigne2024selective} from finite groups to $\SO(3)$ on $S^2$,
  adding the CG power augmentation step that is specific to real
  signals.}
\label{alg:so3-augmented}
\small
\begin{algorithmic}[1]
\STATE \textbf{Input:} Band-limit $L$
\STATE \textbf{Output:} Bispectral set $\mathcal{S}_\beta$, CG power
  set $\mathcal{S}_P$
\STATE $\mathcal{S}_\beta \leftarrow \{(0,0,0)\}$;
  $\mathcal{S}_P \leftarrow \emptyset$
\STATE \textbf{Seed block:} add to $\mathcal{S}_\beta$ all triples
  $(\ell_1, \ell_2, \ell)$ with $0 \le \ell_1 \le \ell_2 \le L_0 = 4$
  and $|\ell_1 - \ell_2| \le \ell \le \min(\ell_1 + \ell_2, L_0)$
  \COMMENT{constant-size full-bispectrum seed}
\FOR{$\ell = 1, \dotsc, L$}
  \STATE $\text{budget} \leftarrow 2\ell + 1$
  \IF{$\ell = 4$}
    \STATE $\text{budget} \leftarrow 10$
      \COMMENT{keep all chain+cross at $\ell=4$}
  \ENDIF
  \STATE $\text{candidates} \leftarrow [\,]$
  \IF{$4 \le \ell \le 7$}
    \STATE append the explicit low-degree linear block
      $\mathcal{T}^{\mathrm{small}}_\ell$ to candidates
  \ELSIF{$\ell \ge 8$}
    \FOR{$a = 1, \dotsc, \ell{-}1$}
      \STATE append $(a,\, \ell,\, \ell{-}a)$ to candidates
    \ENDFOR
    \FOR{$a = 2, \dotsc, \ell{-}1$}
      \IF{\NOT ($\ell$ is odd \AND $a = (\ell{+}1)/2$)}
        \STATE append $(a,\, \ell,\, \ell{-}a{+}1)$ to candidates
          \COMMENT{skip $(r,2r{-}1,r)\equiv0$ on real signals,
            Prop.~\ref{prop:app-odd-vanishing}}
      \ENDIF
    \ENDFOR
    \FOR{$a = 1, \dotsc, 4$}
      \STATE append $(a,\, \ell{-}a,\, \ell)$ to candidates
    \ENDFOR
    \IF{$\ell$ is odd}
      \STATE append $(2,\, \ell{-}1,\, \ell)$ to candidates
        \COMMENT{compensate excluded $(r,2r{-}1,r)$ row}
    \ENDIF
  \ELSE
    \STATE append low-degree chain candidates
    \IF{$\ell \ne 2$}
      \STATE append cross candidates
        \COMMENT{$\beta_{1,2,1}$ excluded}
    \ENDIF
  \ENDIF
  \STATE append $(0,\, \ell,\, \ell)$ to candidates
    \COMMENT{power entry}
  \IF{$\ell \notin \{1,\, 2\}$}
    \FOR{$\ell' = \ell{-}1, \dotsc, 0$}
      \STATE append $(\ell,\, \ell,\, \ell')$ to candidates
    \ENDFOR
  \ENDIF
  \IF{$\ell = 2$}
    \STATE append $(\ell,\, \ell,\, \ell)$ to candidates
      \COMMENT{$\beta_{2,2,2}$}
  \ENDIF
  \STATE Remove entries that vanish for real signals; deduplicate
    against $\mathcal{S}_\beta$ already added by the seed block
  \STATE $\mathcal{S}_\beta \leftarrow \mathcal{S}_\beta
    \;\cup\;$ first $\min(\text{budget},\,
    |\text{candidates}|)$ live entries
  \STATE \COMMENT{\textbf{Mandatory even self-couplings}}
  \FOR{$\ell' = 2, 4, \dotsc, \lfloor\ell/2\rfloor \cdot 2$
    with $\ell' \le \ell$ and $\ell \ge 3$}
    \STATE $\mathcal{S}_\beta \leftarrow \mathcal{S}_\beta
      \cup \{(\ell,\, \ell,\, \ell')\}$
  \ENDFOR
  \STATE \COMMENT{\textbf{CG power augmentation}}
  \STATE Greedily add $P_{\ell_1,\ell_2,\ell'}$ (with
    $\max(\ell_1,\ell_2) = \ell$, $|\ell_1{-}\ell_2| \le \ell'
    \le \ell_1{+}\ell_2$) to $\mathcal{S}_P$ until the
    Jacobian rank of the per-degree restriction
    $\Phi_{\mathrm{sel}}|_\ell$ matches the number of unknowns
\ENDFOR
\RETURN $(\mathcal{S}_\beta,\, \mathcal{S}_P)$
\end{algorithmic}
\end{algorithm}

\begin{proposition}[Output size]\label{prop:app-output-size}
For every $L\ge 4$, Algorithm~\ref{alg:so3-augmented} produces a
$\Theta(L^2)$-sized invariant: the per-degree bootstrap block
contributes at most $2\ell{+}1$ bispectral triples
(Eq.~\eqref{eq:app-Tell} for $\ell\ge 8$ plus
Table~\ref{tab:Tsmall} for $5\le\ell\le 7$), and the self-coupling
and CG-power augmentations add $O(\ell)$ per degree, giving a total
of $\Theta(L^2)$ real scalars. Per-$L$ counts are listed in
Table~\ref{tab:app-counts}; both panels match the orbit-space
lower bound $(L{+}1)^2-3$ of Proposition~\ref{prop:app-lower-bound}
up to a constant factor.
\end{proposition}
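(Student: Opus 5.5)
The plan is to prove matching upper and lower bounds on $N(L):=|\Phi_{\mathrm{sel}}|=|\mathcal{S}_\beta|+|\mathcal{S}_P|$ by counting per degree. I will write $N(L)=|\mathcal{S}_{\mathrm{seed}}|+\sum_{\ell=5}^{L} n_\ell$, where $n_\ell$ is the number of entries Algorithm~\ref{alg:so3-augmented} appends at degree $\ell$. By Definition~\ref{def:app-selective}, $|\mathcal{S}_{\mathrm{seed}}|$ is a constant; Table~\ref{tab:app-counts} gives $34$ total entries at $L=4$. It therefore suffices to show $c_1\ell\le n_\ell\le c_2\ell$ for all $\ell\ge 5$ with absolute constants $c_1,c_2>0$. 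Summing then gives $N(L)=\Theta(L^2)$, because $\sum_{\ell=5}^L \ell=\Theta(L^2)$.

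\textbf{Upper bound.} At degree $\ell$ there are three contributions.
\begin{enumerate}
\item The linear bootstrap block is capped by the explicit budget $2\ell+1$ in Algorithm~\ref{alg:so3-augmented}. For $\ell\ge 8$ this is also visible directly from the bookkeeping after Eq.~\eqref{eq:app-Tell}: $|\mathcal{T}_\ell|=2\ell+1$. For $5\le\ell\le 7$ it follows from the counts in Table~\ref{tab:Tsmall}.
\item The power entry $(0,\ell,\ell)$ and the self-coupling candidates $(\ell,\ell,\ell')$ are at most $\ell+1$ in number, and they only compete within the same budget. The mandatory even self-couplings add at most $\lfloor \ell/2\rfloor$ further triples.
\item The CG-power augmentation adds entries only until the per-degree Jacobian rank reaches the number of unknowns at degree $\ell$, which is $2\ell+1$ real degrees of freedom (Section~\ref{sec:app-setup}). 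Each added scalar contributes a single Jacobian row, and the greedy loop stops once full rank is reached, so at most $2\ell+1$ CG-power scalars are appended.
\end{enumerate}
Adding these gives $n_\ell\le (2\ell+1)+\lfloor\ell/2\rfloor+(2\ell+1)=O(\ell)$.

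\textbf{Lower bound.} There are two routes. The direct one: the budget step appends exactly $\min(2\ell+1,\#\text{live candidates})$ entries, and for $\ell\ge 8$ the candidate list already contains the $2\ell+1$ pairwise-distinct triples of $\mathcal{T}_\ell$. These are distinct from one another by the disjointness argument after Eq.~\eqref{eq:app-Tell}. They are distinct from the seed because each involves index $\ell>4$. They are live by the parity exclusion of Proposition~\ref{prop:app-odd-vanishing}. Hence $n_\ell\ge 2\ell+1$. The alternative route: the greedy loop enforces per-degree Jacobian rank $2\ell+1$, so $n_\ell\ge 2\ell+1$ rows are needed. Either route gives $N(L)\ge \sum_{\ell=5}^L(2\ell+1)=\Omega(L^2)$. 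This also matches Proposition~\ref{prop:app-lower-bound} up to a constant, since $(L+1)^2-3=\Theta(L^2)$.

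\textbf{Main obstacle.} The delicate step is certifying liveness of the $\mathcal{T}_\ell$ entries, meaning no triple vanishes identically on real signals. This is what makes the lower bound rigorous rather than dependent on Jacobian-rank termination, which the paper establishes only empirically. I would handle it with the parity lemma (Lemma~\ref{lem:app-parity}): repeated-index triples of odd parity are exactly the vanishing ones, and Eq.~\eqref{eq:app-Tell} excludes these by construction. Every other admissible triple is a nonzero polynomial, because a suitable choice of nonzero $a_{\ell_i}^{m}$ picks out a single nonvanishing CG coefficient. Finally, I would check the finitely many cases $5\le\ell\le7$ by hand against Table~\ref{tab:Tsmall} and the tabulated counts in Table~\ref{tab:app-counts}.
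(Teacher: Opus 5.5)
Your plan is the same per-degree count the paper relies on. The paper's only justification is the sentence inside the proposition: at most $2\ell+1$ bootstrap triples plus $O(\ell)$ augmentation per degree, summed. Your budget bound and self-coupling bound are fine. The gap is in item~3, which is also the one place where the paper merely asserts $O(\ell)$.

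\textbf{The CG-power bound.} "Each added scalar contributes a single Jacobian row, and the loop stops at full rank" only says the rank grows by \emph{at most} one per scalar. That bounds from below how many scalars are needed, not from above how many are appended. The candidate pool $\{P_{\ell_1,\ell_2,\ell'}:\max(\ell_1,\ell_2)=\ell\}$ has $\Theta(\ell^2)$ members; there are already $(\ell+1)^2$ with $\ell_2=\ell$. Reaching full rank is only checked numerically (Remark~\ref{rem:app-Tell-rank-deficit}). So a loop that appended rank-neutral candidates, or never reached full rank, would add $\Theta(\ell^2)$ scalars per degree and $\Theta(L^3)$ in total. The missing ingredient is the acceptance rule: read ``greedily'' as appending a candidate only when it strictly increases the per-degree Jacobian rank. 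Then each appended scalar raises the rank by exactly one. The rank cannot exceed the $2\ell+1$ real unknowns in $\mathbf{F}_\ell$, so at most $2\ell+1$ scalars are appended, whether or not full rank is ever reached.

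\textbf{Liveness.} Your sketch does not work verbatim on the real slice. There, $a_\ell^{m}$ with $m\neq 0$ cannot be switched on alone, since it is tied to $a_\ell^{-m}$. Also, with a repeated index the coefficient of a monomial is a sum of two Clebsch--Gordan terms. That is exactly the sum that cancels in Proposition~\ref{prop:app-odd-vanishing}, so non-cancellation in the even case has to be shown. A correct version for even-parity triples keeps only the real coordinates $a^0_{\ell_i}$. Then $\beta$ reduces to $\CG{\ell_1}{0}{\ell_2}{0}{\ell_3}{0}\,a^0_{\ell_1}a^0_{\ell_2}a^0_{\ell_3}$, and this coefficient is nonzero for admissible triples with even sum. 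The odd-parity rows of $\mathcal{T}_\ell$ all have distinct indices. For them, note that on real signals $\beta$ is proportional to a trilinear form in three independent vectors, and the real slice of each $\C^{2\ell_i+1}$ spans it over $\C$.

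\textbf{A cheaper lower bound.} None of this is needed for $\Omega(L^2)$. The mandatory even self-couplings $(\ell,\ell,\ell')$, $\ell'\in\{2,4,\dots\}$, are appended unconditionally and are live by the $m=0$ computation. They are new, because they contain $\ell$ twice, while every triple from an earlier degree $\ell''$ has all indices at most $\max(\ell'',4)<\ell$. Hence $N(L)\ge\sum_{\ell=5}^{L}\lfloor \ell/2\rfloor=\Omega(L^2)$ directly. The same maximal-index observation is what your direct route needs to show distinctness from \emph{all} earlier degrees, not only from the seed.
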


\subsection{Parity structure of the bispectrum}
\label{sec:app-parity}

The selective construction of Section~\ref{sec:so3-selective}
exploits two structural facts about the scalar bispectrum
$\beta_{\ell_1,\ell_2,\ell}$: a parity rule that fixes whether each
entry is real or purely imaginary on real signals (Lemma~\ref{lem:app-parity})
and the resulting identical vanishing of certain odd-parity entries
with a repeated index (Proposition~\ref{prop:app-odd-vanishing}). The
latter dictates the explicit exclusion in $\mathcal{T}_\ell$ at odd
target degrees, and the former is the source of the rank deficit
that motivates the CG-power augmentation of
Algorithm~\ref{alg:so3-augmented} (Remark~\ref{rem:app-Tell-rank-deficit}).

\begin{lemma}[Parity of the bispectrum]\label{lem:app-parity}
For any signal (real or complex):
\begin{equation}\label{eq:app-parity}
  \beta_{\ell_1,\ell_2,\ell}(T_R \cdot f)
  = (-1)^{\ell_1+\ell_2+\ell}\,\beta_{\ell_1,\ell_2,\ell}(f).
\end{equation}
For \emph{real} signals, there is an independent constraint:
\begin{equation}\label{eq:app-reality-constraint}
  \conj{\beta_{\ell_1,\ell_2,\ell}(f)}
  = (-1)^{\ell_1+\ell_2+\ell}\,\beta_{\ell_1,\ell_2,\ell}(f).
\end{equation}
Together:
\begin{enumerate}[nosep,label=(\alph*)]
\item If $\ell_1+\ell_2+\ell$ is even:
  $\beta$ is real-valued (by~\eqref{eq:app-reality-constraint})
  and $T_R$-invariant (by~\eqref{eq:app-parity}).
\item If $\ell_1+\ell_2+\ell$ is odd:
  $\beta$ is purely imaginary (by~\eqref{eq:app-reality-constraint})
  and $\beta(T_R \cdot f) = -\beta(f)$, so $\im\beta$ flips sign under $T_R$.
\end{enumerate}
\end{lemma}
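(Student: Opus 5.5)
The plan is to prove the two identities \eqref{eq:app-parity} and \eqref{eq:app-reality-constraint} separately, each by a direct coefficient-level computation on Definition~\ref{def:app-bispectrum}. Items (a) and (b) then follow by combining them.

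I read $T_R$ as the point reflection (antipodal map) $x\mapsto -x$ on $S^2$ acting on signals by $(T_R\cdot f)(x)=f(-x)$. This map is not in $\SO(3)$, which is why \eqref{eq:app-parity} is a non-trivial statement.

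\textbf{Step 1: the identity \eqref{eq:app-parity}.} I use the standard identity $\SH_\ell^m(-x)=(-1)^\ell\,\SH_\ell^m(x)$. In coordinates $(\theta,\varphi)\mapsto(\pi-\theta,\varphi+\pi)$, the associated Legendre factor contributes $(-1)^{\ell+m}$ and the azimuthal factor contributes $(-1)^m$. Substituting into \eqref{eq:app-sh-expansion} gives $a_\ell^m(T_R\cdot f)=(-1)^\ell a_\ell^m(f)$ for every $m$, with no reality assumption needed. Every summand of \eqref{eq:app-bispectrum} is trilinear in $(a_{\ell_1},a_{\ell_2},\conj{a_\ell})$, and the sign $(-1)^\ell$ is real, so it survives conjugation. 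Hence each summand is multiplied by $(-1)^{\ell_1+\ell_2+\ell}$, which proves \eqref{eq:app-parity} for arbitrary complex $f$.

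\textbf{Step 2: the identity \eqref{eq:app-reality-constraint}.} Take the complex conjugate of \eqref{eq:app-bispectrum}, using that the Condon--Shortley Clebsch--Gordan coefficients are real. Then apply the reality condition $\conj{a_\ell^m}=(-1)^m a_\ell^{-m}$ to all three factors. The product of signs is $(-1)^{m_1+m_2+m}=(-1)^{2m}=1$ because $m_1+m_2=m$. This leaves
\[
\conj{\beta_{\ell_1,\ell_2,\ell}}=\sum_{m_1+m_2=m}\CG{\ell_1}{m_1}{\ell_2}{m_2}{\ell}{m}\,a_{\ell_1}^{-m_1}a_{\ell_2}^{-m_2}\conj{a_\ell^{-m}}.
\]
Reindex by $(m_1,m_2,m)\mapsto(-m_1,-m_2,-m)$; this preserves the constraint $m_1+m_2=m$. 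Then invoke the CG symmetry $\CG{\ell_1}{-m_1}{\ell_2}{-m_2}{\ell}{-m}=(-1)^{\ell_1+\ell_2-\ell}\CG{\ell_1}{m_1}{\ell_2}{m_2}{\ell}{m}$. Since $(-1)^{\ell_1+\ell_2-\ell}=(-1)^{\ell_1+\ell_2+\ell}$, this yields \eqref{eq:app-reality-constraint}.

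\textbf{Step 3: items (a) and (b).} If $\ell_1+\ell_2+\ell$ is even, \eqref{eq:app-reality-constraint} reads $\conj\beta=\beta$, so $\beta$ is real. By \eqref{eq:app-parity}, $\beta$ is also $T_R$-invariant. If $\ell_1+\ell_2+\ell$ is odd, \eqref{eq:app-reality-constraint} reads $\conj\beta=-\beta$, so $\beta$ is purely imaginary. By \eqref{eq:app-parity}, $\beta(T_R\cdot f)=-\beta(f)$, so $\im\beta$ flips sign under $T_R$.

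\textbf{Main obstacle.} The only delicate point is sign conventions. The argument needs three things to hold in the convention fixed by the paper and library: (i) real CG coefficients, (ii) the reality relation $a_\ell^{-m}=(-1)^m\conj{a_\ell^m}$, and (iii) the inversion sign $(-1)^\ell$ for $\SH_\ell^m$. I would state these at the outset as the Condon--Shortley convention. I would also note that the $m$-dependent signs cancel exactly, which makes the result insensitive to the remaining phase choices.
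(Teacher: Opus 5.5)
Your proof is correct. Steps 2 and 3 coincide with the paper's argument for \eqref{eq:app-reality-constraint}: conjugate, apply the reality relation, cancel $(-1)^{m_1+m_2+m}=1$, re-index, and use $\CG{\ell_1}{-m_1}{\ell_2}{-m_2}{\ell}{-m}=(-1)^{\ell_1+\ell_2-\ell}\CG{\ell_1}{m_1}{\ell_2}{m_2}{\ell}{m}$.

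The real difference is Step 1. The paper's proof fixes $T_R$ through its coefficient action $a_\ell^m\mapsto(-1)^m a_\ell^{-m}$. In the Condon--Shortley convention this is the mirror $\varphi\mapsto-\varphi$, not the antipodal map, so the paper needs the same re-indexing and CG symmetry to obtain \eqref{eq:app-parity}. Your antipodal map is diagonal in the harmonic basis, $a_\ell^m\mapsto(-1)^\ell a_\ell^m$. With it, \eqref{eq:app-parity} falls out immediately, with no Clebsch--Gordan identity at all, and for complex signals too.

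Since your operator is not literally the paper's, add a one-line bridge. The mirror is the antipodal map composed with the rotation by $\pi$ about the $y$-axis. By $\SO(3)$-invariance of $\beta$, both operators therefore give the same value. In fact \eqref{eq:app-parity} holds for every improper isometry of $S^2$.

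What the paper's choice buys in exchange is economy between the two identities. For real $f$, its $T_R$ acts as coefficientwise conjugation, since $(-1)^m a_\ell^{-m}=\conj{a_\ell^m}$. Hence $\beta(T_R\cdot f)=\conj{\beta(f)}$, and \eqref{eq:app-reality-constraint} is just \eqref{eq:app-parity} specialized to real signals, which is why the paper can say ``re-indexing similarly''. Your route gives a cleaner Step 1, but it must carry out the CG computation separately in Step 2.
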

\begin{proof}
Under $T_R$, each $a_\ell^m \mapsto (-1)^m a_\ell^{-m}$.
Substituting into $\beta = \sum_m (\sum_{m_1+m_2=m} C\, a^{m_1} a^{m_2})
\conj{a^m}$, the three sign factors $(-1)^{m_1+m_2+m} = (-1)^{2m} = 1$
cancel. Re-indexing $m \mapsto -m$ and applying the CG symmetry
$\CG{\ell_1}{-m_1}{\ell_2}{-m_2}{\ell}{-m}
  = (-1)^{\ell_1+\ell_2-\ell}\,\CG{\ell_1}{m_1}{\ell_2}{m_2}{\ell}{m}$
yields~\eqref{eq:app-parity} (with $(-1)^{\ell_1+\ell_2-\ell} = (-1)^{\ell_1+\ell_2+\ell}$).

For real signals, $\conj{a_\ell^m} = (-1)^m a_\ell^{-m}$.
Taking the complex conjugate of $\beta$ and re-indexing similarly
yields~\eqref{eq:app-reality-constraint}.
\end{proof}

\begin{lemma}[CG power is $T_R$-invariant]\label{lem:app-P-invariant}
For all triples $(\ell_1,\ell_2,\ell)$:
$P_{\ell_1,\ell_2,\ell}(T_R \cdot f) = P_{\ell_1,\ell_2,\ell}(f)$.
\end{lemma}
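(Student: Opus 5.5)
The plan is to reuse the substitution from the proof of Lemma~\ref{lem:app-parity}. Here $T_R$ acts coefficientwise by $a_\ell^m \mapsto (-1)^m a_\ell^{-m}$. First write the projected vector
\[
  u_m(f) := \sum_{m_1+m_2=m} \CG{\ell_1}{m_1}{\ell_2}{m_2}{\ell}{m}\, a_{\ell_1}^{m_1} a_{\ell_2}^{m_2},
  \qquad P_{\ell_1,\ell_2,\ell}(f) = \sum_{m=-\ell}^{\ell} \abs{u_m(f)}^2 ,
\]
which is just Definition~\ref{def:app-cg-power} with the inner sum named. The proof then reduces to showing that $T_R$ acts on the vector $(u_m)_m$ by a signed permutation of its entries. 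Since $P$ is the squared Euclidean norm of that vector, it is unchanged.

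\textbf{Computing the action.} Substituting the action of $T_R$ gives
\[
  u_m(T_R f) = \sum_{m_1+m_2=m} \CG{\ell_1}{m_1}{\ell_2}{m_2}{\ell}{m}\,(-1)^{m_1+m_2}\, a_{\ell_1}^{-m_1} a_{\ell_2}^{-m_2}.
\]
The sign $(-1)^{m_1+m_2}=(-1)^m$ is constant on each summand, so it factors out. Next re-index $m_1\mapsto -m_1$, $m_2\mapsto -m_2$; the constraint becomes $m_1+m_2=-m$. Then apply the CG symmetry already used in Lemma~\ref{lem:app-parity},
\[
  \CG{\ell_1}{-m_1}{\ell_2}{-m_2}{\ell}{-m} = (-1)^{\ell_1+\ell_2-\ell}\,\CG{\ell_1}{m_1}{\ell_2}{m_2}{\ell}{m}.
\]
Together these give
\[
  u_m(T_R f) = (-1)^{m+\ell_1+\ell_2-\ell}\, u_{-m}(f).
\]

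\textbf{Conclusion.} Taking moduli, $\abs{u_m(T_R f)} = \abs{u_{-m}(f)}$. Summing the squares over $m\in\{-\ell,\dots,\ell\}$, and noting that $m\mapsto -m$ is a bijection of this index range, yields $P_{\ell_1,\ell_2,\ell}(T_R f)=P_{\ell_1,\ell_2,\ell}(f)$. No reality assumption is needed, and the result holds for every admissible triple with no parity case split. This is exactly the contrast with the odd-parity bispectral entries, where the residual sign $(-1)^{\ell_1+\ell_2+\ell}$ survives. Here it is killed by taking the modulus.

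\textbf{Main obstacle.} The only delicate point is bookkeeping: making sure the index substitution is applied consistently to both summation variables and to the output index $m$, so that the CG symmetry applies with the correct sign convention. Beyond that the argument is routine. As a cross-check, one can also argue abstractly. The map $T_R$ is, on each $\mathbf{F}_\ell$, a unitary (signed antidiagonal) matrix $J_\ell$ that intertwines the CG decomposition up to a sign: $(J_{\ell_1}\otimes J_{\ell_2})$ restricted to the $\ell$-isotypic part equals $\pm J_\ell$. Norms are preserved by unitaries, so $P$ is invariant.
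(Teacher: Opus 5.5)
Your proposal is correct and follows essentially the same route as the paper. The paper likewise asserts that the CG-projected component transforms as $u_M \mapsto (-1)^{\ell_1+\ell_2+\ell+M}\,u_{-M}$ under $T_R$, then concludes by taking the squared norm and re-indexing $M \mapsto -M$; your explicit derivation of that rule via the substitution and the CG symmetry supplies the computation the paper states without proof.
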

\begin{proof}
Under $T_R$, the CG-projected component transforms as
$(\mathbf{F}_{\ell_1} \otimes \mathbf{F}_{\ell_2})|_\ell^M
  \mapsto (-1)^{\ell_1+\ell_2+\ell+M}\,
  (\mathbf{F}_{\ell_1} \otimes \mathbf{F}_{\ell_2})|_\ell^{-M}$.
Taking the squared norm and re-indexing the sum over $M$
leaves $P$ unchanged.
\end{proof}

\begin{proposition}[Vanishing of odd-parity entries with a repeated index]
\label{prop:app-odd-vanishing}
Let $\beta_{\ell_1,\ell_2,\ell}$ be the scalar bispectrum entry of a
real signal $f$ (so $\ell_1+\ell_2+\ell$ odd). If the index multiset
$\{\ell_1,\ell_2,\ell\}$ contains a repeated value, then
$\beta_{\ell_1,\ell_2,\ell}(f)\equiv 0$ as a polynomial in the SH
coefficients of $f$.
\end{proposition}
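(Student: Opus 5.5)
The plan is to rewrite $\beta_{\ell_1,\ell_2,\ell}$ on the real-signal slice as a contraction of three coefficient vectors against a Wigner $3j$ symbol. The $3j$ symbol is antisymmetric under column transposition when $\ell_1+\ell_2+\ell$ is odd. A repeated index then gives a sum equal to its own negative.

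\emph{Step 1 (rewrite the conjugate).} For real $f$ we have $\conj{a_\ell^m}=(-1)^m a_\ell^{-m}$ (Section~\ref{sec:app-setup}). Substituting this into Definition~\ref{def:app-bispectrum} gives
\[
\beta_{\ell_1,\ell_2,\ell}=\sum_{m_1+m_2=m}\CG{\ell_1}{m_1}{\ell_2}{m_2}{\ell}{m}\,(-1)^m\,a_{\ell_1}^{m_1}a_{\ell_2}^{m_2}a_{\ell}^{-m}.
\]
Next I would use the standard relation
\[
\CG{\ell_1}{m_1}{\ell_2}{m_2}{\ell}{m}=(-1)^{\ell_1-\ell_2+m}\sqrt{2\ell+1}\begin{pmatrix}\ell_1&\ell_2&\ell\\ m_1&m_2&-m\end{pmatrix}.
\]
The two factors $(-1)^m$ multiply to $+1$. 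Writing $m_3=-m$, this yields
\[
\beta_{\ell_1,\ell_2,\ell}=(-1)^{\ell_1-\ell_2}\sqrt{2\ell+1}\sum_{m_1+m_2+m_3=0}\begin{pmatrix}\ell_1&\ell_2&\ell\\ m_1&m_2&m_3\end{pmatrix}a_{\ell_1}^{m_1}a_{\ell_2}^{m_2}a_{\ell}^{m_3}.
\]
The resulting polynomial identity holds on the whole real slice, which is Zariski dense in the space where the identity is formulated. So it suffices to show that the trilinear sum $S$ on the right vanishes identically once the repeated index is imposed.

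\emph{Step 2 (antisymmetry).} The $3j$ symbol picks up the factor $(-1)^{\ell_1+\ell_2+\ell}=-1$ under any transposition of two columns. Suppose two of the degrees coincide, say columns $i$ and $j$ carry the same degree $\lambda$. The corresponding factors $a_\lambda^{m_i}a_\lambda^{m_j}$ are symmetric under exchanging $m_i\leftrightarrow m_j$. The summation constraint $m_1+m_2+m_3=0$ is also preserved by this exchange. Re-indexing the sum by the swap therefore gives $S=-S$, hence $S=0$.

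This argument covers all three placements of the repeated index uniformly. In particular it handles the triple $(r,2r-1,r)$ excluded in $\mathcal{T}_\ell$. For the case $\ell_1=\ell_2$ the reality condition is not even needed: the $1\leftrightarrow2$ CG symmetry with sign $(-1)^{\ell_1+\ell_2-\ell}$ already forces $\beta=-\beta$ for complex signals. I would note this as a remark.

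\emph{Main obstacle.} The only delicate point is sign bookkeeping. One must check that the phase in the CG-to-$3j$ conversion, combined with $(-1)^m$ from the reality condition, leaves only a global factor independent of the $m_i$. Otherwise the antisymmetry argument would be spoiled by an $m$-dependent sign. I would verify this by checking that the exponent $m$ appears exactly twice, so all $m$-dependence cancels, under the Condon--Shortley convention also used in Lemma~\ref{lem:app-parity}. As a sanity check, the conclusion is consistent with Lemma~\ref{lem:app-parity}(b): an odd-parity entry is purely imaginary there, and zero is purely imaginary.
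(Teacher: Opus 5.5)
Your proposal is correct and follows the paper's proof: use the reality condition to replace $\conj{a_\ell^m}$, convert the Clebsch--Gordan coefficient to a $3j$ symbol so that the $m$-dependent signs cancel and $\beta_{\ell_1,\ell_2,\ell}=(-1)^{\ell_1-\ell_2}\sqrt{2\ell+1}\,T_{\ell_1,\ell_2,\ell}$, then use the $(-1)^{\ell_1+\ell_2+\ell}$ column-swap antisymmetry with the repeated coefficient vector to get $T=-T$. The Zariski-density remark is unnecessary, since $S\equiv 0$ for all complex inputs already gives the result on the real slice; your side remark that the case $\ell_1=\ell_2$ holds even for complex signals is correct and is not stated in the paper.
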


\begin{proof}
Define the symmetric scalar 3j contraction
\[
  T_{\ell_1,\ell_2,\ell_3}(f) :=
  \sum_{m_1,m_2,m_3}
  \begin{pmatrix} \ell_1 & \ell_2 & \ell_3 \\
                  m_1 & m_2 & m_3 \end{pmatrix}\,
  a_{\ell_1}^{m_1}\, a_{\ell_2}^{m_2}\, a_{\ell_3}^{m_3} .
\]
Converting the Wigner~3j into a Clebsch-Gordan coefficient via
$\bigl(\begin{smallmatrix} j_1 & j_2 & j_3 \\
       m_1 & m_2 & m_3\end{smallmatrix}\bigr)
= \frac{(-1)^{j_1-j_2-m_3}}{\sqrt{2j_3+1}}\,
  \CG{j_1}{m_1}{j_2}{m_2}{j_3}{-m_3}$ and applying the reality identity
$a_{\ell_3}^{m_3} = (-1)^{m_3}\,\conj{a_{\ell_3}^{-m_3}}$ shows
\[
  \beta_{\ell_1,\ell_2,\ell}(f)
  = (-1)^{\ell_1+\ell_2}\sqrt{2\ell+1}\,
    T_{\ell_1,\ell_2,\ell}(f),
\]
so $\beta$ and $T$ have the same vanishing locus. The 3j symbol is
invariant under even column permutations and picks up
$(-1)^{\ell_1+\ell_2+\ell_3}$ under odd ones. If two columns $i,j$
satisfy $\ell_i=\ell_j$, the inserted vectors $a_{\ell_i},a_{\ell_j}$
are literally the same coefficient vector of $f$, so swapping those
two columns leaves the contracted sum $T$ unchanged while multiplying
the 3j prefactor by $(-1)^{\ell_1+\ell_2+\ell_3}$. For odd parity this
forces $T=-T$, hence $T\equiv 0$ and $\beta\equiv 0$.
\end{proof}

\begin{remark}[Consequences for $\mathcal{T}_\ell$]
\label{rem:app-Tell-zero-rows}
Proposition~\ref{prop:app-odd-vanishing} applies in particular to the
family $(r,2r-1,r)$: it has parity $4r-1$ (odd) and a repeated index,
so $\beta_{r,2r-1,r}\equiv 0$ on real signals. Numerical evaluation at
the seed-42 real witness confirms this: for $r=2,\dots,7$ the entry is
zero to double-precision rounding ($|\beta|\lesssim 10^{-14}$). Any
formal definition of $\mathcal{T}_\ell$ must therefore exclude these
triples (see~\eqref{eq:app-Tell}); we substitute a chain row with
all-distinct indices in their place.
\end{remark}

\begin{corollary}[First nonzero odd-parity entry]\label{cor:app-first-odd}
The first bispectral entry with odd parity and generically nonzero imaginary
part occurs at degree $\ell = 4$:
$\beta_{2,3,4}$, with
$\ell_1+\ell_2+\ell = 9$ (odd) and
$\ell_1, \ell_2, \ell$ all distinct.
At the deterministic witness (seed~42), $\im(\beta_{2,3,4}) \approx 3.26 \ne 0$,
and $\im(\beta_{2,3,4})(T_R \cdot f) \approx -3.26$.
Similarly, $\beta_{2,4,3}$ and $\beta_{3,4,2}$ are nonzero odd-parity entries
at degree~$4$.
\end{corollary}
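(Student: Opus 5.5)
The plan is to split the statement into three parts: (i) no odd-parity admissible triple with all indices at most $3$ is generically nonzero; (ii) at degree $4$ the only candidates are orderings of $\{2,3,4\}$; (iii) these candidates really are generically nonzero and flip sign under $T_R$. Parts (i) and (ii) are a finite enumeration combined with Proposition~\ref{prop:app-odd-vanishing}. Part (iii) rests on a witness computation plus Lemma~\ref{lem:app-parity}.

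\emph{Enumeration.} By Proposition~\ref{prop:app-odd-vanishing}, an odd-parity entry can survive on real signals only if its three indices are pairwise distinct. The case of a $0$ index is excluded by admissibility. If $0\in\{\ell_1,\ell_2,\ell\}$, the triangle inequality forces the other two indices to be equal, so the index is repeated. Hence the survivors are admissible triples with three distinct positive indices. For maximal index $\le 3$ the only such set is $\{1,2,3\}$, whose sum is $6$ (even). So every odd-parity entry at degrees $\le 3$ vanishes identically. For maximal index $4$, the distinct positive sets containing $4$ are $\{1,2,4\}$, $\{1,3,4\}$ and $\{2,3,4\}$. Of these, $\{1,2,4\}$ violates the triangle inequality ($4>1+2$), and $\{1,3,4\}$ has even sum $8$. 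Only $\{2,3,4\}$, with sum $9$, remains, and every ordering of it is admissible. This yields exactly $\beta_{2,3,4}$, $\beta_{2,4,3}$ and $\beta_{3,4,2}$. By the 3j relation $\beta=(-1)^{\ell_1+\ell_2}\sqrt{2\ell+1}\,T$ from the proof of Proposition~\ref{prop:app-odd-vanishing}, these three are nonzero multiples of the single polynomial $T_{2,3,4}$ (up to the column-permutation sign). They therefore vanish or survive together.

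\emph{Generic nonvanishing and sign flip.} $\im\beta_{2,3,4}$ is a real polynomial in the $(L+1)^2$ real coordinates of the real-signal slice. It is therefore either identically zero or nonzero off a proper real algebraic (measure-zero) subset. So a single point where it is nonzero suffices. I would supply the seed-42 witness value $\im\beta_{2,3,4}\approx 3.26$. The sign flip $\im\beta_{2,3,4}(T_R\cdot f)=-\im\beta_{2,3,4}(f)$ then follows directly from Lemma~\ref{lem:app-parity}(b), since the parity sum is odd. The same argument covers $\beta_{2,4,3}$ and $\beta_{3,4,2}$ via the proportionality to $T_{2,3,4}$ noted above.

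\emph{Main obstacle.} The only nontrivial step is certifying nonvanishing rigorously rather than to floating-point accuracy. A value of $3.26$ is far above double-precision rounding ($\sim 10^{-14}$), so the numerical witness is already convincing. For a hand-checkable certificate, I would instead pick a sparse real signal and expand $T_{2,3,4}$ using only the few 3j symbols with small $|m|$. A natural choice is one $m=0$ component in $\mathbf F_4$ plus the pairs $a^{\pm1}_2$ and $a^{\pm1}_3$ with non-real phases (imaginary parts are needed, since the entry is purely imaginary). The remaining work is to check that the surviving terms do not cancel.
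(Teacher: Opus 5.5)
Your proposal is correct and follows the paper's route: Proposition~\ref{prop:app-odd-vanishing} plus a finite enumeration rules out every odd-parity entry at degrees $\le 3$ (cf.\ Remark~\ref{rem:app-no-parity-breaking-le3}), the seed-42 witness supplies nonvanishing, and Lemma~\ref{lem:app-parity}(b) supplies the sign flip under $T_R$. Your additions make it tighter than the paper's version. Admissibility is what actually excludes distinct odd-sum triples such as $\{0,1,2\}$. The 3j relation gives $\beta_{2,3,4}=-3T_{2,3,4}$, $\beta_{2,4,3}=-\sqrt7\,T_{2,3,4}$ and $\beta_{3,4,2}=-\sqrt5\,T_{2,3,4}$, which justifies the ``similarly'' clause. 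Your sparse certificate works exactly, since it gives $T_{2,3,4}=2i\,a_4^0\,w\,\im(\conj{a_2^1}\,a_3^1)$ with $w=\bigl(\begin{smallmatrix}2&3&4\\1&-1&0\end{smallmatrix}\bigr)=-\tfrac13\sqrt{5/14}\neq0$; so the requirement is a non-real \emph{relative} phase between $a_2^1$ and $a_3^1$, not merely non-real phases.
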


\begin{remark}[No parity-breaking at $\ell \le 3$]\label{rem:app-no-parity-breaking-le3}
For degrees $\le 3$, every odd-parity triple $(\ell_1,\ell_2,\ell)$
has at least two indices equal (since $\max(\ell_1,\ell_2,\ell) \le 3$
and $\ell_1+\ell_2+\ell$ odd forces at least one pair to be equal in this range).
By Proposition~\ref{prop:app-odd-vanishing}, all odd-parity entries vanish.
Therefore, the $T_R$ ambiguity cannot be resolved using only degrees $\le 3$.
\end{remark}

\begin{remark}[Why the augmented block is needed]
\label{rem:app-Tell-rank-deficit}
The bootstrap block $\mathcal{T}_\ell$ alone is rank-deficient on the
real-signal slice: at the seed-42 real witness, the
$(2\ell{+}1)\times(2\ell{+}1)$ submatrix $A_\ell$ assembled from the
$\mathcal{T}_\ell$ rows has SVD rank
$\approx \ell{+}2$ rather than $2\ell{+}1$ for $\ell\in\{8,\dots,15\}$
(verified numerically; the deficit is generic over real witnesses).
The mandatory even self-couplings and CG-power scalars are quadratic
or cubic in $\mathbf{F}_\ell$ and supply the additional independent
real equations needed to fill out the per-degree real Jacobian; the
augmented per-degree block (linear $\mathcal{T}_\ell$ plus quadratic
self-couplings plus CG-power scalars) is what
Algorithm~\ref{alg:so3-augmented} assembles, and is what the
implementation actually solves at degrees $\ell\ge 5$.
\end{remark}

\subsection{Comparison with existing results}
\label{sec:app-comparison}

\begin{table}[h]
  \caption{Comparison of bispectral invariants for signals on $S^2$ or
    finite groups.}
  \label{tab:app-comparison}
  \centering
  \small
  \setlength{\tabcolsep}{4pt}
  \begin{tabular}{@{}l l c c@{}}
    \toprule
    Result & Invariant & Size & Complete? \\
    \midrule
    Kakarala~\citep{kakarala2012bispectrum}
      & Full bispec. & $O(L^3)$ & Yes (constr.) \\
    Edidin \& Satriano~\citep{edidin2024orbit}
      & Full bispec.\ (band-lim.)
      & $O(L^3)$ & Yes (generic) \\
    Bendory et al.~\citep{bendory2025orbit}
      & Freq.\ marching
      & $O(L^2)$/shell & Yes (3~shells) \\
    Mataigne et al.~\citep{mataigne2024selective}
      & Sel.\ $G$-bispec.\ (finite)
      & $O(|G|)$ & Yes (exact) \\
    \textbf{This work}
      & \textbf{Sel.\ $\SO(3)$ on $S^2$}
      & $\boldsymbol{\Theta(L^2)}$
      & \textbf{Empirical (Section~\ref{sec:so3-recon})} \\
    \bottomrule
  \end{tabular}
\end{table}

\subsection{Complexity in pixel space}
\label{sec:app-pixel-complexity}

On $S^2$, a band-limited signal at degree~$L$ is sampled on an
equiangular grid of size $N_\theta \times N_\varphi \approx 2L \times 2L$,
so $P = 4L^2$ and $L = \Theta(\sqrt{P})$.

\paragraph{Output size.}
\begin{center}
\small
\begin{tabular}{lcc}
  \toprule
  Invariant & In $L$ & In $P$ \\
  \midrule
  Power spectrum & $O(L)$ & $O(\sqrt{P})$ \\
  \textbf{Selective bispectrum} & $\boldsymbol{O(L^2)}$
    & $\boldsymbol{O(P)}$ \\
  Full bispectrum & $O(L^3)$ & $O(P^{3/2})$ \\
  \bottomrule
\end{tabular}
\end{center}
The selective bispectrum is \emph{linear} in the number of pixels.

\paragraph{Computation cost.}
\begin{center}
\small
\begin{tabular}{lcccc}
  \toprule
  Variant & Entries & Cost/entry & Total & In $P$ \\
  \midrule
  Full & $O(L^3)$ & $O(L)$ & $O(L^4)$ & $O(P^2)$ \\
  Selective & $O(L^2)$ & $O(L)$ & $O(L^3)$ & $O(P^{3/2})$ \\
  \bottomrule
\end{tabular}
\end{center}

\paragraph{Concrete example.}
For a $128 \times 256$ grid ($P \approx 32\mathrm{K}$, $L = 64$): the
full bispectrum produces ${\sim}87\mathrm{K}$ scalars at
${\sim}17\mathrm{M}$~ops; the selective bispectrum produces
${\sim}4\mathrm{K}$ scalars at ${\sim}260\mathrm{K}$~ops---a
$65\times$ reduction.

\section{Extended Benchmark Results}
\label{app:bench}

This appendix supplements the summary in Table~\ref{tab:benchmarks}
(main text) with scaling plots produced by
\texttt{benchmarks/benchmark.py} on a single NVIDIA H100 80\,GB GPU
(PyTorch 2.10).

\subsection{Coefficient-count scaling}

\begin{figure}[h]
  \centering
  \includegraphics[width=\linewidth]{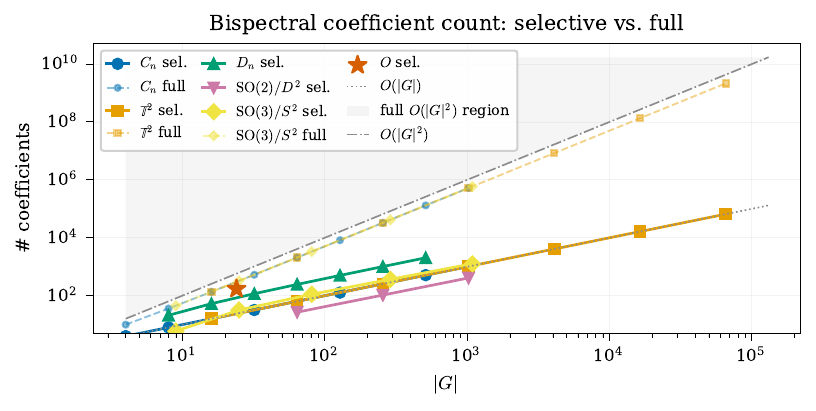}
  \caption{Bispectral coefficient count vs.\ group order $|G|$. Solid
    lines: selective ($O(|G|)$); dashed: full ($O(|G|^2)$). At
    $|G|{=}1{,}024$, selectivity yields a $512\times$ reduction.
    Continuous groups ($\SO(2)$, $\SO(3)$) are plotted at their
    band-limited signal dimension; full-bispectrum curves for $D_n$
    and the disk are omitted (backends not yet implemented).}
  \label{fig:coeff-scaling}
\end{figure}

\subsection{Forward-pass timing}

\begin{figure}[h]
  \centering
  \includegraphics[width=\linewidth]{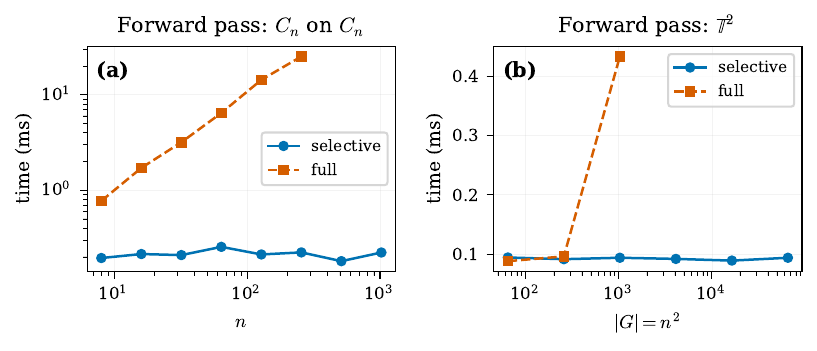}
  \caption{Forward-pass wall-clock time (batch\,=\,16, H100 GPU).
    \textbf{(a)}~$C_n$: the selective pass stays flat at
    ${\sim}$0.14\,ms regardless of $n$, while the full bispectrum
    grows super-linearly (17\,ms at $n{=}256$).
    \textbf{(b)}~$\mathbb{T}^2$: selective is constant at
    ${\sim}$0.08\,ms up to $|G|{=}65{,}536$.}
  \label{fig:forward-timing}
\end{figure}

The selective forward pass is dominated by kernel-launch overhead
rather than arithmetic, making it effectively free relative to backbone
computation. The full bispectrum exhibits the expected super-linear
growth and becomes impractical beyond $|G| \approx 256$ for $C_n$.

\subsection{GPU throughput and inversion}

\begin{figure}[h]
  \centering
  \includegraphics[width=\linewidth]{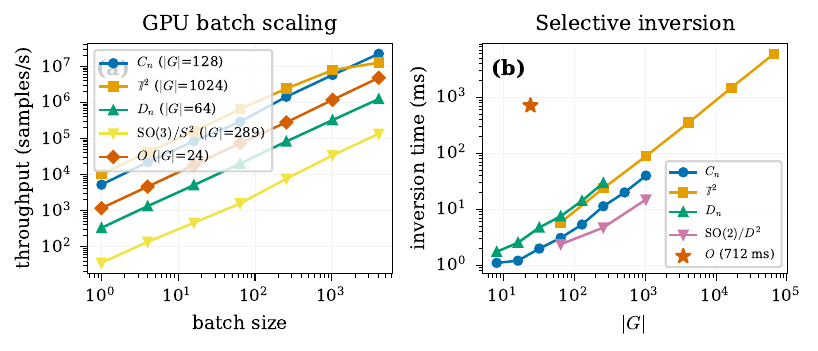}
  \caption{\textbf{(a)}~GPU throughput (samples/s) vs.\ batch size for
    the selective forward pass. $C_n$ and $\mathbb{T}^2$ exceed
    $10^7$~samples/s; the octahedral module reaches
    $5.7 \times 10^6$~samples/s. All modules scale linearly with batch
    size.
    \textbf{(b)}~Inversion wall-clock time. Abelian groups ($C_n$,
    $D_n$, $\SO(2)$/disk) scale roughly linearly. $\SO(3)$ on $S^2$
    inversion is not yet implemented.}
  \label{fig:gpu-scaling}
\end{figure}

All modules exhibit linear throughput scaling with batch size, confirming
that the implementation is memory-bandwidth-bound rather than
compute-bound at these scales.

\section{PCam: Extended Results}
\label{app:pcam-details}

\begin{table}[h]
  \caption{Accuracy of a $G$-bispectral pooling layer on PatchCamelyon (PCam)~\citep{veeling2018rotation}, a binary histopathology classification benchmark. $G$-bispectral pooling is competitive with classical pooling approaches.}
  \label{tab:pcam-results}
  \centering
  \small
  \begin{tabular}{lrcc}
    \toprule
    Model & Params & Test AUC & Test ACC \\
    \midrule
    Standard (data augmented)          & 102K & $.896 \pm .010$ & $.826 \pm .012$ \\
    Gated                         & 136K & $.941 \pm .009$ & $.870 \pm .004$ \\
    Selective Bispectrum (\texttt{CnonCn})  & 128K & $.941 \pm .004$ & $\mathbf{.872 \pm .006}$ \\
    Fourier-ELU                   & 110K & $\mathbf{.945 \pm .004}$ & $.855 \pm .006$ \\
    NormReLU                      & 110K & $.942 \pm .004$ & $.858 \pm .007$ \\
    \bottomrule
  \end{tabular}
\end{table}

\subsection{Published baselines}

Table~\ref{tab:pcam-context} places our results in the context of
published methods on PCam and the closely related Camelyon16
benchmark. Our models operate at 68K--920K parameters and are
trained on only 10\% of the data.

\begin{table}[h]
  \caption{Published baselines on PatchCamelyon from
    \citet{veeling2018rotation}, trained on 100\% data.}
  \label{tab:pcam-context}
  \centering
  \small
  \begin{tabular}{llrcc}
    \toprule
    Method & Reference & Params & ACC & AUC \\
    \midrule
    P4M-DenseNet ($C_8$)  & Veeling et al.\ 2018     & 119K & 0.898 & 0.963 \\
    P4M-DenseNet M        & Veeling et al.\ 2018     &  19K & 0.893 & 0.958 \\
    DenseNet (augmented)   & Veeling et al.\ 2018     & 128K & 0.881 & 0.951 \\
    DenseNet (no aug.)     & Veeling et al.\ 2018     & 128K & 0.876 & 0.955 \\
    \bottomrule
  \end{tabular}
\end{table}

\subsection{Model variants}

All equivariant models share a $C_8$-equivariant
DenseNet~\citep{huang2017densenet} backbone with block configuration
$(4, 4, 4)$, 24 initial channels, and compression factor 0.5,
following the architecture of \citet{veeling2018rotation} but
replacing the final invariant map. Six model variants are evaluated:

\begin{itemize}
  \item \textbf{Standard}: vanilla CNN with random rotation/reflection
    augmentation and global average pooling. No group dimension,
    so ${\sim}8\times$ fewer parameters at matched growth rate.
  \item \textbf{NormReLU}: $C_8$-equivariant backbone with norm-based
    nonlinearity and max pooling over the group dimension.
  \item \textbf{Gated}: $C_8$-equivariant backbone with gated
    nonlinearity (doubles internal channels) and group max pool.
  \item \textbf{Fourier-ELU}: $C_8$-equivariant backbone with
    Fourier-space ELU nonlinearity and group max pool.
  \item \textbf{Bispectrum}: $C_8$-equivariant backbone with ReLU
    nonlinearity and selective $G$-bispectrum via
    \texttt{CnonCn}$(n{=}8)$ as the terminal invariant map.
\end{itemize}

\subsection{Training details}

To enable a controlled Pareto comparison, we sweep the DenseNet
growth rate $k \in \{3, 4, 6, 8, 12\}$ for equivariant models (and
$k \in \{6, 12, 20, 30, 35\}$ for the standard CNN to cover a
comparable parameter range), yielding models from ${\sim}30$K to
${\sim}1.6$M parameters. All models are trained on \textbf{10\% of
training data} (26{,}214 images) with AdamW~\citep{loshchilov2019adamw}
(lr $= 10^{-3}$, weight decay $10^{-4}$), cosine warm restarts,
mixed-precision training, and early stopping on validation AUC
(patience 10, max 50 epochs).

\subsection{Pareto analysis interpretation}

\emph{Architectural stability} (single seed, growth-rate sweep).
The bispectrum's AUC varies by only ${\sim}1$pp across a $12\times$
range of model sizes (80K--920K parameters), while norm swings by
$>$7pp. For practitioners, bispectral pooling ``just works'' without
growth rate tuning. The multi-seed matched-parameter results
(Table~\ref{tab:pcam-results}) confirm this: bispectrum and Fourier-ELU
have the lowest cross-seed variance ($\pm 0.004$).

\emph{Matched parameters.}
At matched parameters (${\sim}100$K), all equivariant models outperform
the standard CNN (AUC $0.896 \pm 0.010$, 3 seeds). Fourier-ELU leads
slightly ($0.945 \pm 0.004$), followed by NormReLU ($0.942$), bispectrum
and gated (both $0.941$). All equivariant methods cluster within
${\sim}0.5$pp of each other.

\emph{Scaling behavior.}
As parameters increase from the smallest to the largest configuration,
standard, norm, and Fourier-ELU all degrade (by 1.8, 2.6, and 0.9
AUC points respectively), while the bispectrum improves (+0.9 points,
reaching 0.943 at 920K params). This suggests that the bispectrum's
complete invariant map scales more gracefully with capacity.

\emph{Data scaling} (3 seeds).
At 1\% data the bispectrum leads all methods ($0.912 \pm 0.025$),
outperforming NormReLU ($0.873$), Fourier-ELU ($0.876$), and gate
($0.884$). By 10\% all equivariant methods converge (${\sim}0.941$--$0.945$).
The bispectrum's data-efficiency advantage narrows as data increases,
consistent with the hypothesis that complete invariants substitute for
missing training examples in the low-data regime.

\section{OrganMNIST3D: Extended Results}
\label{app:organ3d-details}

\subsection{Published baselines}

Table~\ref{tab:organ3d-context} situates our results against published
methods on the same benchmark. Our models are deliberately small
(${\leq}463$K parameters) to enable controlled ablation of the pooling
mechanism. The relevant comparison is against equivariant methods at a
similar scale.

\begin{table}[h]
  \caption{Published baselines on OrganMNIST3D for context.
    Accuracy differences with respect to large-scale models reflect the
    deliberate parameter constraint of our ablation study.}
  \label{tab:organ3d-context}
  \centering
  \small
  \begin{tabular}{llrcc}
    \toprule
    Method & Reference & Params & ACC & AUC \\
    \midrule
    ResNet-18 + 3D           & Yang et al.\ 2023         & 33M  & 0.907 & 0.996 \\
    ResNet-18 + ACS          & Yang et al.\ 2023         & 11M  & 0.900 & 0.994 \\
    ILPOResNet-50            & \citet{zhemchuzhnikov2024ilponet}       & 38K  & 0.879 & 0.992 \\
    EquiLoPO (local train.)  & \citet{zhemchuzhnikov2025equilopo}      & 418K & 0.866 & 0.991 \\
    SE3MovFrNet              & \citet{sangalli2023movfrnet}     & ---  & 0.745 & ---   \\
    Regular SE(3) conv       & \citet{kuipers2023regular}   & 172K & 0.698 & ---   \\
    \bottomrule
  \end{tabular}
\end{table}

\subsection{Rotation analysis}

Figure~\ref{fig:organ3d-rotation} shows that all equivariant models
achieve identical performance on the original and octahedrally rotated
test sets ($\sigma_{\mathrm{rot}} \approx 0$), while the
augmentation-trained standard CNN degrades under rotation.

\begin{figure}[h]
  \centering
  \includegraphics[width=0.75\linewidth]{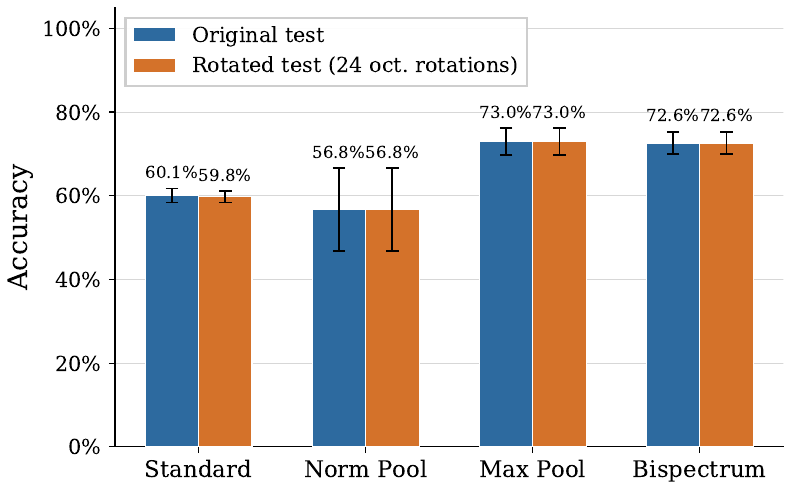}
  \caption{OrganMNIST3D test accuracy on the original and octahedrally
    rotated test sets. Error bars show standard deviation over three
    seeds. All equivariant models achieve identical performance under
    rotation ($\sigma_{\mathrm{rot}} \approx 0$), while the
    augmentation-trained standard CNN degrades.}
  \label{fig:organ3d-rotation}
\end{figure}

\subsection{Wider channels}

To test whether the advantage of bispectral completeness persists at
higher capacity, we run max pool and bispectrum at channel widths
$(8, 16)$ and $(16, 32)$ with full training data.

\begin{table}[h]
  \caption{OrganMNIST3D accuracy at wider channel configurations
    (mean $\pm$ std over 3 seeds, full training data).}
  \label{tab:organ3d-wider}
  \centering
  \small
  \begin{tabular}{lrcc}
    \toprule
    Channels & Params & Max Pool & Bispectrum \\
    \midrule
    $(4, 8)$   & ${\sim}$375K / 463K  & $.730 \pm .033$ & $.726 \pm .027$ \\
    $(8, 16)$  & ${\sim}$1.5M / 1.7M  & $.743 \pm .015$ & $.745 \pm .006$ \\
    $(16, 32)$ & ${\sim}$6.0M / 6.3M  & $\mathbf{.785 \pm .032}$ & $.685 \pm .039$ \\
    \bottomrule
  \end{tabular}
\end{table}

At $(8, 16)$ the two methods are equivalent ($74.5 \pm 0.6\%$ bispectrum vs.\
$74.3 \pm 1.5\%$ max pool, both at ${\sim}1.5$--$1.7$M parameters). At $(16, 32)$ max
pool reaches $78.5 \pm 3.2\%$ while bispectrum drops to $68.5 \pm 3.9\%$---the 6.3M
parameter bispectrum model likely overfits the 971 training samples.
This suggests that the bispectrum's completeness advantage is most
pronounced in the data-limited rather than the parameter-limited
regime, consistent with the data efficiency findings in the main text.

\section{Experimental Details}
\label{app:exp}

\subsection{Hyperparameters}

Table~\ref{tab:hyperparams} summarises the training hyperparameters
for all three experimental settings.

\begin{table}[h]
  \caption{Training hyperparameters across experiments.}
  \label{tab:hyperparams}
  \centering
  \small
  \setlength{\tabcolsep}{4pt}
  \begin{tabular}{lccc}
    \toprule
    & PCam & OrganMNIST3D & Sph.\ MNIST \\
    \midrule
    Optimizer      & AdamW & AdamW & AdamW \\
    Learning rate  & $10^{-3}$ & $10^{-3}$ & $10^{-3}$ \\
    Weight decay   & $10^{-4}$ & $10^{-4}$ & $10^{-4}$ \\
    Batch size     & 64--256 & 16--64 & 256 \\
    Scheduler      & Cosine warm restarts & --- & --- \\
    Max epochs     & 50 & 100 & 50 \\
    Early stopping & Val AUC, pat.\ 10 & Val AUC, pat.\ 15 & Val ACC, pat.\ 10 \\
    Seeds          & 42, 123, 456$^\dagger$ & 42, 123, 456 & 42, 123, 456 \\
    Mixed precision & Yes & No & No \\
    \bottomrule
  \end{tabular}

  \vspace{0.3em}
  {\small $^\dagger$Pareto figure uses seed 42; matched-parameter table and data efficiency use seeds 42, 123, 456.}
\end{table}

\subsection{Hardware}

All experiments were run on a single NVIDIA H100 GPU (80\,GB).
Training wall-clock times range from ${\sim}$5 minutes (OrganMNIST3D,
971 samples) to ${\sim}$20 minutes (PCam, 26K samples at 10\% data).
Spherical MNIST bispectrum computation (forward pass through
\texttt{SO3onS2} at $\ell_{\max} = 15$) takes ${\sim}$0.3s per batch
of 64 on GPU.

\subsection{Data preprocessing}

\paragraph{PCam.}
PatchCamelyon~\citep{veeling2018rotation} consists of 327{,}680 RGB
patches of size $96 \times 96$ pixels extracted from whole-slide
images of lymph node sections~\citep{bejnordi2017camelyon}. We use
the standard train/val/test split. Images are loaded from HDF5 files
and normalised to $[0, 1]$. The standard (non-equivariant) baseline
uses random rotation and reflection augmentation during training; all
equivariant models are trained without augmentation.

\paragraph{OrganMNIST3D.}
OrganMNIST3D~\citep{medmnistv2} contains 1{,}742 abdominal CT organ
volumes of size $28 \times 28 \times 28$ voxels with 11 organ classes
(train/val/test = 971/161/610). Volumes are normalised to $[0, 1]$.
The standard baseline uses random octahedral augmentation (one of 24
rotations per sample); equivariant models are trained without
augmentation. Kernel rotations are implemented as exact voxel
permutations on the $3^3$ grid, avoiding interpolation artifacts.

\begin{table}[h]
  \caption{OrganMNIST3D (mean $\pm$ std, 3 seeds). Bispectral pooling
    matches max pool ($72.6\%$ vs.\ $73.0\%$), far exceeds norm pool
    ($56.8\%$), and delivers near-exact invariance
    ($\sigma_{\mathrm{rot}} \le 0.0005$).}
  \label{tab:organ3d-results}
  \centering
  \small
  \begin{tabular}{lrcccr}
    \toprule
    Model & Params & Test ACC & Test AUC & Rot ACC & $\sigma_{\mathrm{rot}}$ \\
    \midrule
    Standard 3D CNN          & 16K  & $.601 \pm .017$ & $.951 \pm .002$ & .598 & .0115 \\
    $O$-Equiv + Norm Pool    & 375K & $.568 \pm .099$ & $.940 \pm .021$ & .568 & .0000 \\
    $O$-Equiv + Max Pool     & 374K & $.730 \pm .033$ & $.972 \pm .005$ & .730 & .0000 \\
    $O$-Equiv + \texttt{OctaonOcta} Bispectrum   & 463K & $.726 \pm .027$ & $.972 \pm .004$ & .726 & .0005$^\dagger$ \\
    \bottomrule
  \end{tabular}

  \vspace{0.3em}
  {\small $^\dagger$Nonzero $\sigma_{\mathrm{rot}}$ due to floating-point precision in CG contractions, not a theoretical invariance failure.}
\end{table}

\paragraph{Spherical MNIST.}
MNIST digits are projected onto the sphere via stereographic
projection following \citet{cohen2018spherical} and sampled on a
$64 \times 128$ equiangular grid. The selective bispectrum is
computed at $\ell_{\max} = 15$, producing 384 complex invariant
coefficients. The model input is obtained by applying
$\operatorname{sign}(x)\log(1+|x|)$ separately to the real and
imaginary parts of each coefficient (a sign-preserving, per-channel
injective transform), giving 768 real features. These features are
fed to a 3-layer MLP: Linear(768, 256)--BN--ReLU--Linear(256,
128)--BN--ReLU--Linear(128, 10). The power spectrum MLP uses the
same structure with hidden=128: Linear(16, 128)--BN--ReLU--Linear(128,
64)--BN--ReLU--Linear(64, 10). The standard CNN has 4 convolutional
layers (16--32--64--64 channels) followed by adaptive average pooling
and a Linear(1024, 128)--ReLU--Linear(128, 10) classifier.

\subsection{Aggregation procedures}

\paragraph{PCam.}
The Pareto figure (Figure~\ref{fig:pcam-pareto}) uses a single seed
(42) across the full growth-rate sweep. The matched-parameter table
(Table~\ref{tab:pcam-results}) and data efficiency analysis use 3 seeds
(42, 123, 456) per configuration.

\paragraph{OrganMNIST3D.}
All results in Table~\ref{tab:organ3d-results} report mean $\pm$ std
over 3 seeds (42, 123, 456). Rotation robustness
($\sigma_{\mathrm{rot}}$) is the standard deviation of test accuracy
across all 24 octahedral rotations of the test set. The data
efficiency analysis (Figure~\ref{fig:data-efficiency} in the main text) uses 3 seeds per data
fraction.

\paragraph{Spherical MNIST.}
All results in Table~\ref{tab:smnist-results} report mean $\pm$ std
over 3 seeds (42, 123, 456). Rotation robustness
($\sigma_{\mathrm{rot}}$) is measured by applying 10 random $\SO(3)$
rotations to the entire test set and computing the standard deviation
of accuracy, yielding 30 measurements per model (10 rotations
$\times$ 3 seeds).

\subsection{Spherical MNIST: bispectrum reconstruction protocol}
\label{app:so3-recon}

This section gives the protocol behind
Figure~\ref{fig:so3-reconstruction} (Section~\ref{sec:so3-recon}).
The implementation lives in
\texttt{experiments/spherical\_mnist\_reconstruction/reconstruct.py}.

\paragraph{Setup.}
Spherical MNIST is rendered onto an equiangular $64{\times}128$
lat--lon grid using the same stereographic projection as the
classifier, then band-limited at $L = 12$ via a forward/inverse SHT
roundtrip (\texttt{torch\_harmonics.RealSHT}). For each digit we
draw a target rotation $R\in\SO(3)$ uniformly at random and form
$f := R\cdot f_0$. We compute the SO(3)-on-$S^2$ selective bispectrum
$\beta := \Phi_{\mathrm{sel}}(f)$.

\paragraph{Reconstruction.}
Starting from a Gaussian initialization $\hat f_0\sim\mathcal{N}(0, I)$
on the $64{\times}128$ grid, we optimize $\hat f$ with Adam
(initial LR $5\times 10^{-2}$, cosine annealing to $5\times 10^{-4}$,
$8000$ steps) to minimize the relative complex L2 loss
$\|\beta(\hat f) - \beta\|^2 / \|\beta\|^2$. After every step we
project $\hat f$ back to the band-limited subspace via $\mathrm{SHT}^{-1}\!\circ\mathrm{SHT}$ truncated at $L$ to suppress the
unconstrained null space. We run $4$ random restarts and keep the
$\hat f$ with the lowest bispectrum residual. Reconstruction
typically reaches $\|\beta(\hat f)-\beta\|/\|\beta\|\sim 10^{-3}$,
near the SHT-discretization invariance floor for this grid.

\paragraph{$\SO(3)$ alignment.}
The bispectrum is invariant on $\SO(3)$-orbits, so $\hat f$ lives
somewhere in the orbit of $f$ rather than at $f$ itself. We resolve
the orbit ambiguity by Procrustes-style fitting: we parameterize
$\hat R \in \SO(3)$ as a unit quaternion (auto-normalized) to avoid
gimbal lock, and minimize $\|\hat R\cdot\hat f - f\|^2$ with Adam
($12$ random restarts, $200$ steps each, cosine-annealed LR).
Rotations of the spherical image use bilinear interpolation in
lat--lon coordinates (\texttt{bispectrum.rotate\_spherical\_function}).

\paragraph{Numerical floor.}
On a $64{\times}128$ grid, two SHT roundtrips of two $\SO(3)$-rotated
copies of the same signal already differ by
$\|f' - f\|/\|f\|\approx 2\times 10^{-2}$ in image space due to
bilinear-interpolation discretization; this is the noise floor below
which alignment cannot improve. The aligned residuals reported in
Figure~\ref{fig:so3-reconstruction} ($0.1$--$0.3$) sit one to two
orders of magnitude above this floor and are dominated by gradient
descent local optima on the bispectrum reconstruction step, not by
missing information.

\subsection{Dataset licenses}

\begin{itemize}[nosep]
  \item \textbf{PCam}: CC0 1.0 Universal (public domain).
  \item \textbf{OrganMNIST3D}: CC BY 4.0.
  \item \textbf{Spherical MNIST}: derived from MNIST, which is
    publicly available for research use.
\end{itemize}

\section{API Example}
\label{app:api-example}

\begin{lstlisting}
import torch
from bispectrum import CnonCn, SO2onDisk, OctaonOcta

bsp = CnonCn(n=8, selective=True)
f = torch.randn(16, 8)
beta = bsp(f)
f_rec = bsp.invert(beta)
\end{lstlisting}

\end{document}